\newtheorem{theorem}{Theorem}
\newtheorem{lemma}{Lemma}
\newtheorem{corollary}{Corollary}
\newtheorem{assumption}{Assumption}
\newcommand*{\Scale}[2][4]{\scalebox{#1}{$#2$}}
\newcommand{\indep}{\perp \!\!\! \perp}
\newcommand{\notindep}{\not \! \perp \!\!\! \perp}
\begin{document}

\title{Generalized Encouragement-Based Instrumental Variables for Counterfactual Regression}

\author{Anpeng~Wu, Kun~Kuang, Ruoxuan~Xiong, Xiangwei~Chen, Zexu~Sun, \\ Fei~Wu,~\IEEEmembership{Senior~Member,~IEEE}, Kun~Zhang,~\IEEEmembership{Member,~IEEE}
\thanks{\IEEEauthorrefmark{1}{Corresponding authors.}}
\IEEEcompsocitemizethanks{
\IEEEcompsocthanksitem Anpeng Wu, Kun Kuang, Xiangwei Chen and Fei Wu are with the College of Computer Science and Technology, Zhejiang University, China.
\protect
(E-mail: anpwu@zju.edu.cn; kunkuang@zju.edu.cn; 3220102218@zju.edu.cn; wufei@zju.edu.cn).
\IEEEcompsocthanksitem Ruoxuan Xiong is with the Department of Quantitative Theory and Methods, Emory University, Atlanta, USA
\protect
(E-mail: ruoxuan.xiong@emory.edu).
\IEEEcompsocthanksitem Zexu Sun is with Gaoling School of Artificial Intelligence, Renmin University of China, China.
\protect
(E-mail: sunzexu21@ruc.edu.cn).
\IEEEcompsocthanksitem Kun Zhang is with Carnegie Mellon University, USA.
\protect
(E-mail: kunz1@cmu.edu).
}
\thanks{This work has been submitted to the IEEE for possible publication. Copyright may be transferred without notice, after which this version may no longer be accessible.}
}

\markboth{Journal of \LaTeX\ Class Files}%
{Shell \MakeLowercase{\textit{et al.}}: A Sample Article Using IEEEtran.cls for IEEE Journals}


\maketitle

\begin{abstract}
In causal inference, encouragement designs (EDs) are widely used to analyze causal effects, when randomized controlled trials (RCTs) are impractical or compliance to treatment cannot be perfectly enforced. Unlike RCTs, which directly allocate treatments, EDs randomly assign encouragement policies that positively motivate individuals to engage in a specific treatment. These random encouragements act as instrumental variables (IVs), facilitating the identification of causal effects through leveraging exogenous perturbations in discrete treatment scenarios. However, real-world applications of encouragement designs often face challenges such as incomplete randomization, limited experimental data, and significantly fewer encouragements compared to treatments, hindering precise causal effect estimation. To address this, this paper introduces novel theories and algorithms for identifying the Conditional Average Treatment Effect (CATE) using variations in encouragement. Further, by leveraging both observational and encouragement data, we propose a generalized IV estimator, named \textbf{En}couragement-based \textbf{Counte}rfactual \textbf{R}egression (\textbf{EnCounteR}), to effectively estimate the causal effects. Extensive experiments on both synthetic and real-world datasets demonstrate the superiority of EnCounteR over existing methods.
\end{abstract}

\begin{IEEEkeywords}
Instrumental Variable, Encouragement Design, Causal Effects, Counterfactual Regression. 
\end{IEEEkeywords}

\section{Introduction}

\begin{figure}[th]
\begin{center}
\includegraphics[width=0.97\linewidth]{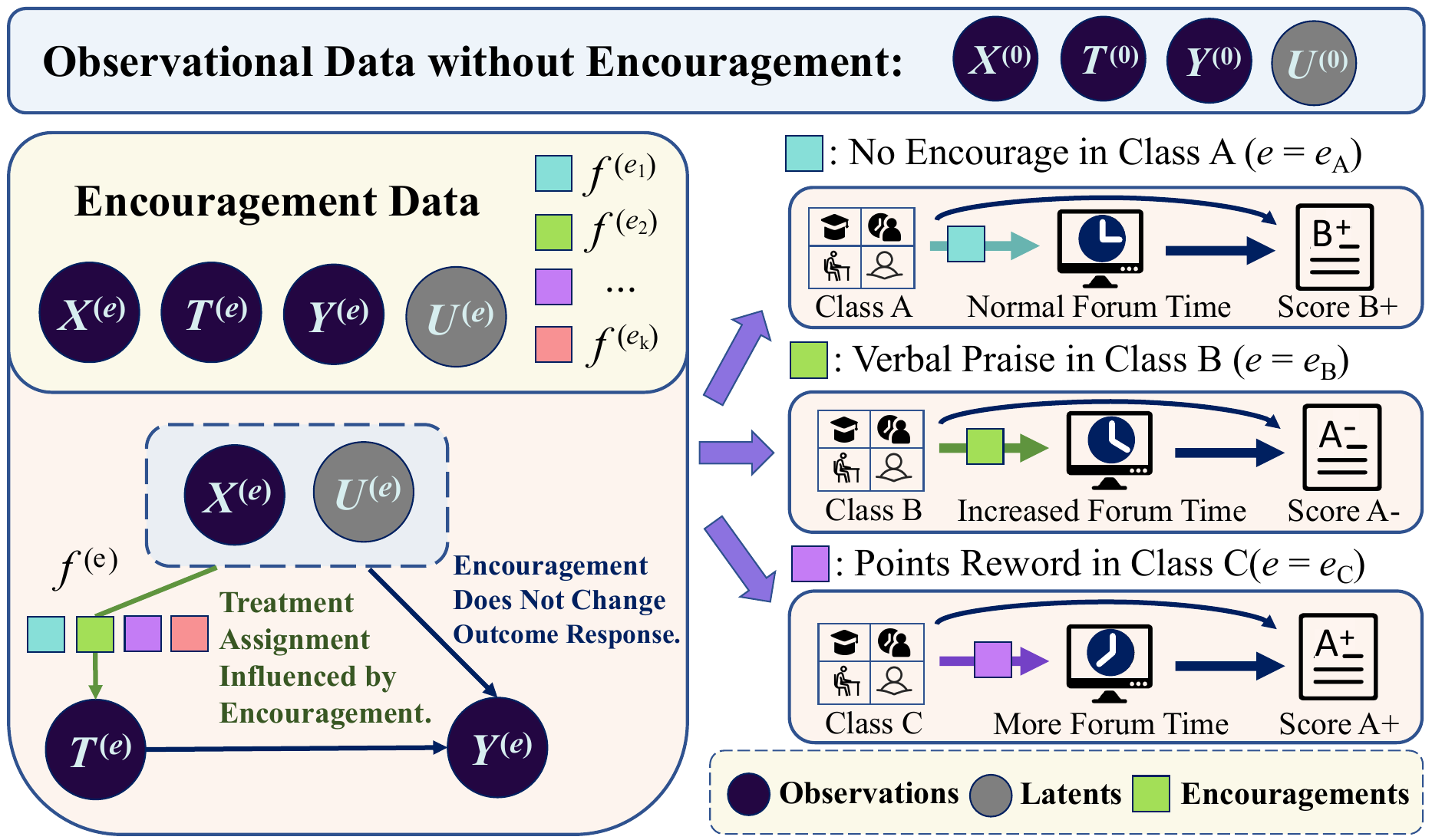}
\end{center}
\caption{Overview of the Encouragement Design Framework. For example, in online course platforms like Coursera, edX, and Udacity, using only observational data to control observed confounders $X^{(0)}$, we can not consistently estimate the causal effects of forum engagement duration $T^{(0)}$ on exam scores $Y^{(0)}$ due to the presence of unmeasured confounders $U^{(0)}$. Therefore, we use varied encouragement policies (Class A: ${e}_\text{A}=\text{None}$, Class B: ${e}_\text{B}=\text{Praise}$, Class C: ${e}_\text{C}=\text{Points}$) to encourage longer forum engagement duration (treatments $T^{(e)}$), while these policies do not have a direct effect on exam scores (outcomes $Y^{(e)}$), which offers opportunities to identify causal effects. 
}
\label{fig:abs}
\end{figure}

\IEEEPARstart{C}{ausal} inference is a powerful statistical modeling tool for explanatory analysis and plays a crucial role in fields like healthcare, economics, and social sciences \cite{angrist1996identification,imbens2015causal,burgess2017review,shalit2017estimating,yao2021survey, wooldridge2016introductory, devriendt2020learning, kuang2020data}. 
While Randomized Controlled Trials (RCTs) are the gold standard for analyzing causal relationships in the presence of unmeasured confounders, randomized treatment allocation often suffers from noncompliance and ethical issues \cite{ kohavi2011unexpected,bottou2013counterfactual,wu2022learning}.
Therefore, randomized encouragement designs (EDs), which randomly assign encouragement policies that positively motivate individuals to engage in a specific treatment, are becoming more popular \cite{small2008war,west2008alternatives} and are widely used to study the causal effects \cite{holland1988causal,small2007sensitivity,kang2016peer}. 

For example, \cite{sexton1984clinical, permutt1989simultaneous} randomly encouraged physicians to advise against smoking to study the effect of smoking on birth weight in pregnant mothers; \cite{angrist1996identification,bang2007estimating,kang2016peer} employed random intent-to-treat to encourage treatment adoption for addressing non-compliance issues. As shown in Figure \ref{fig:abs}, these random encouragements serve as instrumental variables (IVs), which only positively motivate the choice of treatment, while the outcome response remains unaffected by encouragements. However, these discrete encouragement-based IVs are limited to identifying the local average treatment effect in discrete treatments under the assumption of monotonicity \cite{angrist1996identification,pearl2010causal,wooldridge2016introductory}. When the number of encouragements is significantly fewer than the treatment choices, it results in weak and sparse treatment variation from the encouragements.

Consider a generalized education scenario depicted in Figure \ref{fig:abs}, where observational data cannot consistently estimate the causal effects due to the unmeasured confounders. Thus, on online course platforms such as Coursera, edX, and Udacity \cite{breslow2013studying,reich2015rebooting,anderson2014engaging,kizilcec2014encouraging}, we adopt various encouragement policies (${e}_\text{A}=\text{None}$, ${e}_\text{B}=\text{Praise}$, ${e}_\text{C}=\text{Points}$) to motivate longer forum engagement (i.e., treatments $T$), which changes the distribution of $T$ given $X$, in other words, increases time spent on the forum to varying encouragements.
Then we can treat random encouragements as IVs and leverage the exogenous perturbations to identify treatment effects on exam scores (i.e., outcomes $Y$). 
However, in many real applications, practitioners typically apply nonrandom encouragement to existing classes, and the number of encouragements may far less than the treatment itself, leading to endogeneity issues and hindering precise causal effect estimation. This restricts its applicability in existing nonrandom social science experiments.
To this end, under nonrandom encouragements and continuous treatments, we develop a novel theory for identifying the Conditional Average Treatment Effect and propose algorithms with identifiability guarantees to estimate the causal effects.

Overall, in many existing nonrandom social experiments, conventional encouragement designs suffer several issues, 
incomplete randomization, limited experimental data, and significantly fewer encouragement policies compared to continuous treatments, resulting in unreliable causal estimations. Furthermore, due to unmeasured confounders, large observational datasets also fail to identify causal effects reliably. Therefore, to improve experimental efficiency, we treat the large observational data ${X^{(0)}, T^{(0)}, Y^{(0)}}$ as a form of special encouragement with ${e} = {e}_\text{0}$, utilizing it as additional information to identify the CATE through small scale encouragement experiments, thereby reducing experimental costs. By leveraging both observational and encouragement data, we propose a generalized IV estimator, named \textbf{En}couragement-based \textbf{Counte}rfactual \textbf{R}egression (\textbf{EnCounteR}\footnote{The codes are available at: \url{https://github.com/anpwu/EnCounteR/}}), to effectively estimate causal effect. Notably, conventional two-stage IV Regression is just a specific instance of our EnCounteR. We provide novel theory and algorithms with identifiability guarantees to relax conventional IV requirements and achieve more precise causal effects estimation. 
Empirical evaluations demonstrate the superiority of EnCounteR.

\section{Related Work}

\textbf{Encouragement designs} have been widely used for analyzing
causal effects, when RCTs are impractical or compliance to treatment cannot be perfectly enforced \cite{sexton1984clinical,permutt1989simultaneous,angrist1996identification,bang2007estimating}.
In the social sciences, where random treatment assignments may be expensive, harmful, or unethical, \cite{angrist1996identification,hirano2000assessing,bang2007estimating,kang2016peer} employed random intent-to-treat as instruments to encourage treatment for addressing non-compliance issues. \cite{fletcher2010social,an2015instrumental,kang2016peer} utilized personalized encouragement assumptions to study the peer effect in school settings. 
\emph{However, real applications of encouragement designs often pose challenges, including non-randomized encouragements, limited experimental data, and a smaller number of encouragements compared to continuous treatments, hindering precise causal effect estimation.}

\textbf{Instrumental variables} induce exogenous perturbations to treatment variable, allowing for the estimation of causal effects in the presence of unmeasured confounders \cite{hirano2000assessing, sovey2011instrumental,an2015instrumental,kang2016peer,cheng2023discovering,sun2024sequential,zhao2024networked}. 
Traditional IV two-stage regression first identifies treatment variation caused by IVs, then uses it to estimate the dependent variable in the second stage \cite{wald1940fitting,angrist1995identification,angrist1996identification}. Based on the sieve theories \cite{newey2003instrumental}, researchers have developed numerous non-linear IV variants \cite{hartford2017deep,singh2019kernel,muandet2020dual,2019deepgmm,lewis2020agmm,xu2020dfiv,wu2022instrumental}. 
While in continuous treatments with discrete IVs, these methods are prone to have a high variance due to limited exogeneity variation, hindering precise causal effect estimation. \emph{To this end, we develop novel identification theory and algorithms utilizing the variation of encouragements.}
 
Recently, there have also been some works studying invariant learning across \textbf{multiple environments} \cite{arjovsky2019invariant,duchi2021learning,creager2021environment,liu2021heterogeneous,liu2021kernelized,wang2023out}.
\cite{arjovsky2019invariant} identified causally invariant relationships in different environments, assuming their existence for exploration.
\cite{liu2021heterogeneous,liu2021kernelized} generated environments and proposed a maximal invariant predictor, integrating environment inference with invariant learning to improve prediction. These studies can effectively identify causal variables and estimate the total effect of treatments and confounders on outcomes \emph{but fail to identify the causal effect of the treatments on outcomes}.

\section{Problem Setup and Solutions}
\subsection{Notations}
Following \cite{liu2021heterogeneous,liu2021kernelized}, we consider a dataset $\mathcal{D} = \{ \mathcal{D}^{(e_k)} \}_{e_k \in \mathcal{E}}$, which comprises multiple datasets $\mathcal{D}^{(e_k)} = \{x_i^{(e_k)}, t_i^{(e_k)}, y_i^{(e_k)} 
\mid u_i^{(e_k)} \}_{i=1}^{n_k}$ under different encouragement designs $e_k$ in  $\mathcal{E} = \{e_0,e_1,\cdots, e_K\}$, and $n_k$ is sample size in encouragement $e_k$. 
Within each dataset $\mathcal{D}^{(e_k)}$, the variables $x_i^{(e_k)} \in \mathcal{X}$ and $u_i^{(e_k)} \in \mathcal{U}$ are respectively the observable and unmeasured confounders, potentially confounding the analysis of the causal effect of the treatment variables $t_i^{(e_k)} \in \mathcal{T}$ on the outcome variables $y_i^{(e_k)} \in \mathcal{Y}$.
As illustrated in Figure \ref{fig:abs}, observational data alone cannot identify the Conditional Average Treatment Effects (CATE) due to unmeasured confounders. Therefore, we apply $K$ different encouragement policies to promote treatment adoption without directly manipulating the treatment in certain candidate groups
$\mathcal{D}^{(e_k)} = \{x_i^{(e_k)}, t_i^{(e_k)}, y_i^{(e_k)} 
\mid u_i^{(e_k)} \}_{i=1}^{n_k}$:
\begin{eqnarray}
    \label{eq:T}
    t_i^{(e_k)} & = &
    f^{(e_k)}_{\Phi}(x_i^{(e_k)}, u_i^{(e_k)}), \\ 
    y_i^{(e_k)} & = & g_{\Psi}( t_i^{(e_k)}, x_i^{(e_k)}) + \varepsilon( u_i^{(e_k)} ),
    \label{eq:Y}
\end{eqnarray}
where  $f^{(e_k)}_\Phi(\cdot)$ denotes different treatment assignment mechanisms, with unknown parameters $\Phi^{(e_k)}$ for various encouragements $e_k \in \mathcal{E}$, $g_{\Psi}(\cdot)$ depicts the heterogeneous treatment effect with unknown parameters $\Psi$, and $\varepsilon( \cdot )$ embeds the unmeasured confounding effects from $u_i^{(e_k)}$ as additive noise that is a common assumption used in causality \cite{newey2003instrumental,imbens2015causal,hartford2017deep}.
In our designs, the large observational data $\mathcal{D}^{(e_0)}$ can optionally serve as a special dataset with no-encouragement $e = e_0$, to increase the number of encouragements and samples in the overall dataset $\mathcal{D} = { \mathcal{D}^{(e_k)} }_{e_k \in \mathcal{E}}$.

Traditional works assign random encouragements that are exogenous and independent of $U$, i.e., $U \indep \mathcal{E}$, for analyzing causal effects. In practice, however, practitioners typically apply nonrandom encouragement to existing groups like classes or cities, making the estimates unreliable. For instance, when encouraging forum participation in classes led by advanced teachers, it's challenging to identify whether improved exam scores are due to encouragement or teachers. Thus, we collect numerous proxies $X$ for $U$ and propose a covariate balancing module to address the distribution shift from encouragements, thereby relaxing the assumption to conditional independence.

To simplify notation, we denote random variables as uppercase notation, $X = \{X^{(e_k)}\}_{e \in \mathcal{E}}$, where $X^{(e_k)} = \{x_i^{(e_k)}\}_{i=1}^{n^{(e_k)}}$ signifies the sample vector of observed pre-treatment variables for each encouragement design $e_k$. Similarly, we define the vectors $U^{(e_k)}$, $T^{(e_k)}$, and $Y^{(e_k)}$, each corresponding to the respective encouragement design denoted by $e$.
Furthermore, we use $\mathbb{E}[X^{(e_k)}]$ and ${\operatorname{Var}\left(X^{(e_k)}\right)}$ to denote the expected value and variance of $X^{(e_k)}$, respectively, and ${\operatorname{Cov}\left(Y^{(e_k)}, X^{(e_k)}\right)}$ to represent the covariance between $Y^{(e_k)}$ and $X^{(e_k)}$ in $e_k$.

\subsection{Assumptions and Theorems}

In many real applications, encouragement designs suffer from nonrandom encouragements, limited samples, and sparse encouragement policies, resulting in unreliable causal estimations.
To address this,
we leverage both observational and encouragement data $\mathcal{D} = \{ \mathcal{D}^{(e_k)} \}_{e_k \in \mathcal{E}}$ and develop novel theory and algorithms to identify causal effects on outcomes. To this end, we naturally start with a linear setting to build intuition on the necessary assumptions and corresponding theorems, and then put efforts into generalizing these insights to more complex nonlinear settings.

\subsubsection{Formalization in Linear Setting}

For illustration, consider a linear reformulation of Eq. \eqref{eq:Y}:
\begin{eqnarray}
    y_i^{(e_k)} = \psi_t t_i^{(e_k)} + \psi_x x_i^{(e_k)} +\psi_u u_i^{(e_k)}.
    \label{eq:linearY}
\end{eqnarray}
where the coefficient $\psi_t$ is the constant causal effect of interest, and the treatment assignments $t_i^{(e_k)} = f^{(e_k)}_{\Phi}(x_i^{(e_k)}, u_i^{(e_k)})$ can be arbitrary functions across encouragements $e_k$.
Under Assumption \ref{ass:linear} and \ref{ass:indep}, we propose a novel identification theorem of causal effect $\psi_t$. 

\begin{assumption}[Linearity] \label{ass:linear}
The outcome variable $Y$ is a linear function of variables $T$,$X$, and $U$.
\end{assumption}

\begin{assumption}[Independence] \label{ass:indep}
    $X$ and $U$ are independent of the encouragements, i.e., $\{X, U\} \indep \mathcal{E}$.
\end{assumption}

This assumption arises from the common linear case where $X \indep \mathcal{E}$ and $U \indep \mathcal{E} \mid X$. In non-linear settings, we retain only $U \indep \mathcal{E} \mid X$ for greater flexibility.

\begin{theorem}
    \label{theorem:1}
    Under Assumptions \ref{ass:linear} \& \ref{ass:indep}, given two datasets $\{ \mathcal{D}^{(e_0)}, \mathcal{D}^{(e_1)}\}$ with different encouragements $\{e_0,e_1\} \in \mathcal{E}$, the causal effect $\psi_t$ is identifiable. 
\end{theorem}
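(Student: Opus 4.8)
The plan is to turn Assumption~\ref{ass:indep} into a cancellation device: because $X$ and $U$ are distributed identically across encouragement arms, any between-arm contrast of outcome moments removes the confounding terms $\psi_x X$ and $\psi_u U$ and leaves a single linear equation in the scalar $\psi_t$. Concretely, I would first record the only consequence of $\{X,U\}\indep\mathcal{E}$ that is needed: the pairs $(X^{(e_0)},U^{(e_0)})$ and $(X^{(e_1)},U^{(e_1)})$ have the same joint law, hence $\mathbb{E}[\phi(X^{(e_0)},U^{(e_0)})]=\mathbb{E}[\phi(X^{(e_1)},U^{(e_1)})]$ for every integrable $\phi$ --- in particular $\mathbb{E}[X^{(e_0)}]=\mathbb{E}[X^{(e_1)}]$ and $\mathbb{E}[U^{(e_0)}]=\mathbb{E}[U^{(e_1)}]$. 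The encouragement is allowed to act only through the assignment map $f^{(e_k)}_{\Phi}$ of Eq.~\eqref{eq:T}, which is precisely why $T^{(e_0)}$ and $T^{(e_1)}$ may differ in distribution; this ``first-stage'' variation is what the argument exploits.

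Next I would take expectations of the linear model~\eqref{eq:linearY} over $\mathcal{D}^{(e_0)}$ and over $\mathcal{D}^{(e_1)}$ and subtract. By Assumption~\ref{ass:linear} the difference splits additively into $\psi_t$-, $\psi_x$-, and $\psi_u$-terms, and by the fact just recorded the last two vanish, giving
\begin{equation*}
\mathbb{E}[Y^{(e_1)}]-\mathbb{E}[Y^{(e_0)}] \;=\; \psi_t\big(\mathbb{E}[T^{(e_1)}]-\mathbb{E}[T^{(e_0)}]\big),
\end{equation*}
so $\psi_t$ equals the ratio of these two observable differences --- a Wald-type estimand of which conventional two-stage IV regression is a special case. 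Should the mean of $T$ happen to be unchanged while its distribution is not, I would repeat the same cancellation after multiplying~\eqref{eq:linearY} by an arbitrary test function $\phi(X^{(e_k)})$ (or by centered $X^{(e_k)}$): the $X$- and $U$-terms again drop out because their joint law with $X$ is encouragement-invariant, producing the analogous identity with $\operatorname{Cov}(Y^{(e_k)},X^{(e_k)})$ and $\operatorname{Cov}(T^{(e_k)},X^{(e_k)})$ in place of the plain means (read componentwise when $X$ is a vector).

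The step I expect to be the real obstacle --- and the one the theorem statement implicitly leans on --- is showing the denominator is nonzero, i.e., an encouragement-relevance condition. Linearity and Assumption~\ref{ass:indep} guarantee that the numerator is exactly $\psi_t$ times a treatment contrast, but they do not preclude that contrast from vanishing; the clean way to close the proof is to assume $e_0$ and $e_1$ induce genuinely different treatment behavior ($\mathbb{E}[T^{(e_1)}]\neq\mathbb{E}[T^{(e_0)}]$, or more weakly a differing conditional law of $T$ given $X$) and then argue that at least one of the moment ratios above is well-defined, which uniquely pins down $\psi_t$. The remaining checks --- that the relevant expectations and covariances are finite so the manipulations are valid --- are routine under the stated model.
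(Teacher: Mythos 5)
Your proposal is correct, and its fallback branch coincides in substance with the paper's argument, but your primary route is genuinely more elementary than what the paper does, so a comparison is worthwhile. The paper works entirely with the regression slopes $\beta_{Y|X}^{(e_k)}=\operatorname{Cov}(Y^{(e_k)},X^{(e_k)})/\operatorname{Var}(X^{(e_k)})$ and $\beta_{T|X}^{(e_k)}$: expanding $\beta_{Y|X}^{(e_k)}=\psi_t\beta_{T|X}^{(e_k)}+\psi_x+\psi_u\beta_{U|X}^{(e_k)}$ and noting that Assumption~\ref{ass:indep} freezes $\beta_{U|X}^{(e_k)}$ (and $\operatorname{Var}(X^{(e_k)})$) across arms, it reads off $\psi_t$ as the ratio of between-arm differences of these slopes --- which is exactly your covariance-based identity after dividing numerator and denominator by the arm-invariant $\operatorname{Var}(X)$. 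Your first-moment, Wald-type contrast $\psi_t=\bigl(\mathbb{E}[Y^{(e_1)}]-\mathbb{E}[Y^{(e_0)}]\bigr)/\bigl(\mathbb{E}[T^{(e_1)}]-\mathbb{E}[T^{(e_0)}]\bigr)$ does not appear in the paper but is equally valid under the same assumptions and needs only means rather than covariances; moreover the two estimands rest on different, non-nested relevance conditions (a shift in $\mathbb{E}[T]$ versus a shift in $\operatorname{Cov}(T,X)$), so carrying both, as you do, identifies $\psi_t$ in strictly more configurations than the paper's single formula. Your closing observation is also well taken: the paper divides by $\beta_{T|X}^{(e_1)}-\beta_{T|X}^{(e_0)}$ without ever asserting that this difference is nonzero, so the encouragement-relevance condition you isolate is an implicit hypothesis of the theorem as stated rather than a consequence of Assumptions~\ref{ass:linear} and~\ref{ass:indep}; making it explicit, as you propose, is the right way to close the argument.
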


\begin{proof}
    First, define $\beta_{Y|X}^{(e_k)} = \frac{\operatorname{Cov}\left(Y^{(e_k)}, X^{(e_k)}\right)}{\operatorname{Var}\left(X^{(e_k)}\right)}$, $\beta_{T|X}^{(e_k)} = \frac{\operatorname{Cov}\left(T^{(e_k)}, X^{(e_k)}\right)}{\operatorname{Var}\left(X^{(e_k)}\right)}$, and $\beta_{U|X}^{(e_k)} = \frac{\operatorname{Cov}\left(U^{(e_k)}, X^{(e_k)}\right)}{\operatorname{Var}\left(X^{(e_k)}\right)}$ for any $e_k$.

    Then, we can reformulate $\beta_{Y|X}^{(e_k)}$ as follows:
    \begin{equation}
    \begin{aligned}
    & \Scale[1.0]{{\beta_{Y|X}^{(e_k)} = \frac{\operatorname{Cov}\left(Y^{(e_k)}, X^{(e_k)}\right)}{\operatorname{Var}\left(X^{(e_k)}\right)}}}   \\ 
    & =  \Scale[1.0]{\frac{{\operatorname{Cov}\left(\psi_t T^{(e_k)}+\psi_x X^{(e_k)}+\psi_u U^{(e_k)}, X^{(e_k)} \right)}}{{\operatorname{Var}\left(X^{(e_k)}\right)}}}  \\  
    & =  \Scale[1.0]{ 
    \psi_t \beta_{T|X}^{(e_k)} 
    + \psi_x+\psi_u \beta_{U|X}^{(e_k)}}.
    \end{aligned}
    \label{eq:4}
    \end{equation}
  
    Given $\{X, U\} \indep \mathcal{E}$, the covariance between $U^{(e_k)}$ and $X^{(e_k)}$, and hence $\beta_{U|X}^{(e_k)}$ remains constant across encouragements. Then, we define $b = \psi_x+\psi_u \Scale[0.93]{\beta_{U|X}^{(e_k)}}$. For encouragements $e_0$ and $e_1$, we have $\beta_{Y|X}^{(e_0)} = \psi_t \beta_{T|X}^{(e_0)} + b$ and
    $\beta_{Y|X}^{(e_1)} = \psi_t \beta_{T|X}^{(e_1)} + b$. Then, the causal effect is identified by: $\psi_t = \frac{\beta_{Y|X}^{(e_1)} - \beta_{Y|X}^{(e_0)}}{\beta_{T|X}^{(e_1)} - \beta_{T|X}^{(e_0)}}$, and thus the causal effect $\psi_t$ is identifiable. 
\end{proof}

We consider the scenario where $X$ are exogenous variables, which are independent of the unmeasured confounders $ U $, i.e., $ X \indep U $, and do not directly affect the outcome $Y$, i.e., $\psi_x=0$. Under these conditions, the traditional IV regression could be viewed as a specific instance of our theorems.

\begin{lemma}
\label{lemma:1}
    With $\psi_x=0$ and $X \indep U$ in linear model Eq. \eqref{eq:linearY}, given the observations $\mathcal{D}^{(e_0)}$, then $\psi_t$ is identifiable by $\psi_t = \beta_{Y|X}^{(e_0)} / \beta_{T|X}^{(e_0)} = \text{Cov}(Y, X)/\text{Cov}(T, X)$.
\end{lemma}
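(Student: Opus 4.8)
The plan is to obtain Lemma~\ref{lemma:1} directly as a degenerate special case of the computation already carried out in the proof of Theorem~\ref{theorem:1}, so essentially no new machinery is required. First I would recall the decomposition in Eq.~\eqref{eq:4}, which under the linear model Eq.~\eqref{eq:linearY} gives, for the single available dataset $e_0$,
\begin{equation*}
\beta_{Y|X}^{(e_0)} = \psi_t\,\beta_{T|X}^{(e_0)} + \psi_x + \psi_u\,\beta_{U|X}^{(e_0)}.
\end{equation*}
This identity holds for \emph{any} treatment assignment mechanism $f^{(e_0)}_{\Phi}$, since it only uses bilinearity of the covariance, and in particular it does not require a second encouragement.

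Next I would impose the two extra hypotheses of the lemma. The exclusion-type restriction $\psi_x = 0$ eliminates the middle term. The exogeneity restriction $X \indep U$ forces $\operatorname{Cov}(U^{(e_0)}, X^{(e_0)}) = 0$, hence $\beta_{U|X}^{(e_0)} = 0$, eliminating the last term. What remains is the one-equation identity $\beta_{Y|X}^{(e_0)} = \psi_t\,\beta_{T|X}^{(e_0)}$, which can be solved for the scalar $\psi_t$. The final step is pure bookkeeping: substituting the definitions $\beta_{Y|X}^{(e_0)} = \operatorname{Cov}(Y^{(e_0)},X^{(e_0)})/\operatorname{Var}(X^{(e_0)})$ and $\beta_{T|X}^{(e_0)} = \operatorname{Cov}(T^{(e_0)},X^{(e_0)})/\operatorname{Var}(X^{(e_0)})$, the common factor $\operatorname{Var}(X^{(e_0)})$ cancels, yielding $\psi_t = \operatorname{Cov}(Y,X)/\operatorname{Cov}(T,X)$, which is exactly the Wald / two-stage IV formula, thereby confirming the claimed reduction to classical IV regression.

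The only genuine obstacle — and the hypothesis I would flag explicitly — is the well-posedness of the division: one needs $\beta_{T|X}^{(e_0)} \neq 0$, i.e.\ $\operatorname{Cov}(T^{(e_0)},X^{(e_0)}) \neq 0$. This is precisely the \emph{instrument relevance} condition, asserting that $X$ actually perturbs the treatment in $e_0$; without it the ratio is undefined and $\psi_t$ is not identified (the weak-instrument regime). I would therefore state this nondegeneracy as a standing assumption, directly analogous to the requirement $\beta_{T|X}^{(e_1)} \neq \beta_{T|X}^{(e_0)}$ used in Theorem~\ref{theorem:1}. Everything else is routine second-moment algebra.
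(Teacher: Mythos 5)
Your proposal is correct and follows exactly the route the paper intends: the lemma is presented as an immediate specialization of the decomposition in Eq.~\eqref{eq:4}, with $\psi_x=0$ killing the intercept term and $X \indep U$ killing $\beta_{U|X}^{(e_0)}$, after which the $\operatorname{Var}(X^{(e_0)})$ factors cancel in the ratio. Your explicit flagging of the relevance condition $\operatorname{Cov}(T^{(e_0)},X^{(e_0)}) \neq 0$ is a sensible addition that the paper leaves implicit.
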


As seen in previous studies \cite{newey2003instrumental,pearl2009causality,imbens2015causal}, while we can use encouragements $e \in \mathcal{E}$ as IVs, the number of encouragements may be considerably fewer than the potential treatment values, resulting in high estimation variance. Our theorem surpasses this, identifying causal effects in continuous cases with lower variance.

\subsubsection{GMM Reformulation}

Theorem \ref{theorem:1} provides a linear analytical solution (LAS) for datasets with two encouragements. However, when the number of encouragements exceeds two, the system becomes over-identified, characterized by more equations than unknowns. To resolve this, we reformulate the issue as a generalized method of moments (GMM) problem. We use a residual $\epsilon_i^{(e_k)}$ to identify the parameters ${\psi_t, b}$, where the residual is defined as:
$
\epsilon_i^{(e_k)} =  y_i^{(e_k)} - \psi_t t_i^{(e_k)} - b x_i^{(e_k)} 
 =  
\psi_u (u_i^{(e_k)} - \beta_{U|X}^{(e_k)} x_i^{(e_k)})$. Here, $\epsilon_i^{(e_k)}$ is not merely $u_i^{(e_k)}$ but a transformation meant to eliminate correlation with the observed covariate $x_i^{(e_k)}$.

\begin{theorem}\label{theorem:2}
Under Linearity Assumption \ref{ass:linear}, \\
\begin{equation}
    \begin{aligned}
    &  \Scale[1.0]{\frac{1}{\psi_u} \operatorname{Cov}(\epsilon^{(e_k)} , X^{(e_k)} )}
    \\ & = 
    \resizebox{0.84\linewidth}{!}{$\operatorname{Cov}\left(U^{(e_k)}, X^{(e_k)}\right) - \frac{\operatorname{Cov}\left(U^{(e_k)}, X^{(e_k)}\right)}{\operatorname{Var}\left(X^{(e_k)}\right)}\operatorname{Var}\left(X^{(e_k)}\right)$}.
     \end{aligned}
    \end{equation} 
Accordingly, $\operatorname{Cov}(\epsilon^{(e_k)} , X^{(e_k)} ) = 0$ for any $e_k \in \mathcal{E}$.
\end{theorem}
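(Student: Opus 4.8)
\emph{Proof proposal.}
The plan is to proceed by direct substitution, exploiting only the bilinearity of the covariance operator and the definition of the population least-squares slope $\beta_{U|X}^{(e_k)}$. First I would recall from the proof of Theorem~\ref{theorem:1} that the constant term here is $b = \psi_x + \psi_u \beta_{U|X}^{(e_k)}$, and plug the linear outcome model \eqref{eq:linearY} into the definition of the residual $\epsilon_i^{(e_k)} = y_i^{(e_k)} - \psi_t t_i^{(e_k)} - b\, x_i^{(e_k)}$. The $\psi_t t_i^{(e_k)}$ terms cancel, the $\psi_x x_i^{(e_k)}$ contribution is absorbed into $b\, x_i^{(e_k)}$, and what remains is exactly the claimed form $\epsilon_i^{(e_k)} = \psi_u\!\left(u_i^{(e_k)} - \beta_{U|X}^{(e_k)} x_i^{(e_k)}\right)$, so the transformation is indeed a "confounder residualized against $X$" rather than $u_i^{(e_k)}$ itself.

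Next I would take the covariance of both sides with $X^{(e_k)}$. Pulling $\psi_u$ out front and using linearity of covariance yields
\[
\frac{1}{\psi_u}\operatorname{Cov}\!\left(\epsilon^{(e_k)}, X^{(e_k)}\right) = \operatorname{Cov}\!\left(U^{(e_k)}, X^{(e_k)}\right) - \beta_{U|X}^{(e_k)} \operatorname{Var}\!\left(X^{(e_k)}\right),
\]
which is precisely the displayed identity in the statement once $\beta_{U|X}^{(e_k)}$ is written out as $\operatorname{Cov}(U^{(e_k)}, X^{(e_k)})/\operatorname{Var}(X^{(e_k)})$. Substituting that definition, the two terms on the right coincide and cancel, so the right-hand side vanishes; hence $\operatorname{Cov}(\epsilon^{(e_k)}, X^{(e_k)}) = 0$ for every $e_k \in \mathcal{E}$. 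If $\psi_u = 0$ the conclusion is immediate, since then $\epsilon_i^{(e_k)}$ reduces to zero (or, more carefully, to the part of $u$ with no variance), and no division by $\psi_u$ is needed.

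I do not anticipate a genuine obstacle: the statement is essentially the first-order normal equation characterizing the population regression of $U^{(e_k)}$ on $X^{(e_k)}$, merely repackaged through $\epsilon^{(e_k)}$. The two points that require care are (i) correctly importing the value of $b$ from Theorem~\ref{theorem:1} rather than treating it as a free parameter, and (ii) observing that the argument uses \emph{only} the Linearity Assumption~\ref{ass:linear} and works within each encouragement group separately, since $\beta_{U|X}^{(e_k)}$ is built from that group's own moments; Assumption~\ref{ass:indep} is not invoked. This group-wise validity is exactly what lets the identity serve as the moment condition for the GMM reformulation when more than two encouragements are present.
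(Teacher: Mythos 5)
Your proposal is correct and follows essentially the same route the paper takes (the paper gives no separate proof environment for Theorem~\ref{theorem:2}; the argument is exactly the substitution $\epsilon^{(e_k)} = \psi_u\bigl(U^{(e_k)} - \beta_{U|X}^{(e_k)} X^{(e_k)}\bigr)$ stated just before the theorem, followed by bilinearity of covariance and cancellation of the two terms). Your added remarks --- handling $\psi_u=0$, and noting that only Assumption~\ref{ass:linear} is used so the identity holds group-wise, which is what makes it usable as a GMM moment --- are accurate refinements of what the paper leaves implicit.
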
 

Define $\tilde{\epsilon}^{(e_k)} = \epsilon^{(e_k)}-\mathbb{E}[\epsilon^{(e_k)}] $ = $\tilde{Y}^{(e_k)} - \psi_t \tilde{T}^{(e_k)} - b \tilde{X}^{(e_k)} $, where $\{\tilde{T},\tilde{X},\tilde{Y}\}$ are de-meaned variables. Based on Theorem \ref{theorem:2}, i.e., $ \mathbb{E}[\tilde{\epsilon}^{(e_k)}\tilde{X}^{(e_k)}] = \operatorname{Cov}(\epsilon^{(e_k)} , X^{(e_k)} ) = 0$, we can derive $K+1$ moments for $\mathcal{E}=\{e_0,e_1,\cdots,e_K\}$:
\begin{eqnarray}
\Scale[0.9]{
g_X\left(\psi_t, b\right)=\left[\begin{array}{c}
\mathbb{E}[(\tilde{Y}^{(e_0)} - \psi_t \tilde{T}^{(e_0)} - b \tilde{X}^{(e_0)})\tilde{X}^{(e_0)}] \\
\cdots \\
\mathbb{E}[(\tilde{Y}^{(e_K)} - \psi_t \tilde{T}^{(e_K)} - b \tilde{X}^{(e_K)})\tilde{X}^{(e_K)}]
\end{array}\right]
}.
\end{eqnarray}
Since the function $Y^{(e_k)} - \hat{Y}^{(e_k)}_{\psi_t,b}$ is only related to $U$ and $X$, where $\hat{Y}^{(e_k)}_{\psi_t,b} = \hat{\psi}_t T^{(e_k)}+\hat{b}X^{(e_k)}$, under Independence Assumption \ref{ass:indep}, we can conclude that $[Y^{(e_k)} - \hat{Y}^{(e_k)}_{\psi_t,b}] \indep \mathcal{E}$:
\begin{eqnarray}
\Scale[0.88]{
g_{\mathcal{E}}\left(\psi_t,b\right) = 
\left[\begin{array}{c}
\mathbb{E}[Y^{(e_i)} - \hat{Y}^{(e_i)}_{\psi_t,b}]  -
\mathbb{E}[Y^{(e_j)} - \hat{Y}^{(e_j)}_{\psi_t,b}] 
\\
\operatorname{Var}[Y^{(e_i)} - \hat{Y}^{(e_i)}_{\psi_t,b}]  -
\operatorname{Var}[Y^{(e_j)} - \hat{Y}^{(e_j)}_{\psi_t,b}]
\end{array}\right]_{i \not= j}}, 
\end{eqnarray}
and the GMM estimator can be written as:
\begin{eqnarray} \label{eq:gmm}
\left({\psi}_t^*, {b}^* \right)
= 
\arg \min_{\hat{\psi}_t, \hat{b}} 
\left[ 
g_X^\prime \cdot W_X \cdot g_X
+ 
g_{\mathcal{E}}^\prime \cdot W_{\mathcal{E}} \cdot g_{\mathcal{E}}
\right],
\end{eqnarray}
where $\{W_X,W_{\mathcal{E}}\}$ are non-negative definite matrices, the optimal $W^\ast$ depends on the moments covariance matrix.

\subsubsection{Generalization to non-linear settings}

Recall the generalized non-linear settings in Eq. \eqref{eq:Y}, i.e., $y_i^{(e_k)} = g_{\Psi}( t_i^{(e_k)}, x_i^{(e_k)}) + \varepsilon( u_i^{(e_k)} )$,
where the outcome response function $g_{\Psi}(\cdot)$ and noise $\varepsilon( \cdot )$ remain constant across different encouragements. Under the classical IV assumptions \cite{newey2003instrumental,hartford2017deep}, 
one can first identify the transformed outcome: 
\begin{eqnarray}
    h_{\theta}( T, X) = g_{\Psi}( T, X) + \mathbb{E}[\varepsilon( U ) \mid X].
    \label{eq:counterfacutal}
\end{eqnarray}

\begin{assumption}[Encouragement-Based Instrumental Variable] \label{ass:encourage}
    The adopted encouragement policies $e \in \mathcal{E}$ serve as IVs, which only positively motivate the choice of treatments, without directly affecting the outcome response, which satisfies the following three IV conditions: \\
(a) \emph{Relevance}: IVs would directly affect $T^{e}$, i.e., $T^{e} \notindep e $; \\
(b) \emph{Exclusion}: IVs do not directly affect $Y^{e}$, i.e., $Y^{e_i}(t, x) = Y^{e_j}(t, x)$ for $e_i \not = e_j$ and all $t$ and $x$; \\
(c) \emph{Independence}: IVs are conditional independent of the error $\varepsilon(U)$, i.e., $e \indep \varepsilon(U) \mid X$.
\end{assumption}

\begin{assumption}[Additive Noise in \cite{hartford2017deep}] \label{ass:additive}
$\varepsilon( u_i^{(e_k)})$ embeds the confounding effect from $u_i^{(e_k)}$ as an additive noise term with  $\mathbb{E}[\varepsilon( U^{(e_k)})]=0$.
\end{assumption}

Under Assumptions \ref{ass:encourage} and \ref{ass:additive}, the encouragements could be seen as valid IVs, Then the CATE is identified by $\text{CATE}(T,X) = h_{\theta}( T, X) - h_{\theta}(0, X)$.

\begin{theorem} \label{theorem:3}
    When the relevance between $T$ and $e$ is strong, the unique solution ${ h }_{\theta}(T,X)$ is identified by the inverse problem given $\mathbb{E}[Y \mid e,T,X]$ and $F(T \mid e, X)$:
\begin{eqnarray}
\Scale[1.0]{\mathbb{E}[Y \mid e, T,X]} = \Scale[1.0]{\int \left[ {h}_{\theta}(T,X) \right] dF(T \mid e,X)} 
\end{eqnarray}
where, $dF(T \mid e,X)$ is the conditional treatment distribution. The proof can be found in \cite{newey2003instrumental}.
\end{theorem}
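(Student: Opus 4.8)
\textbf{Proof proposal for Theorem \ref{theorem:3}.}

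The plan is to recognize this as the classical nonparametric instrumental variables (NPIV) identification problem and reduce it to the completeness-based argument of \cite{newey2003instrumental}. First I would fix the data-generating restriction: under Assumption \ref{ass:encourage}(b) (Exclusion) the structural response $g_\Psi(t,x)$ does not depend on $e$, and under Assumption \ref{ass:additive} the noise term $\varepsilon(U)$ has conditional mean $\mathbb{E}[\varepsilon(U)\mid X]$ that likewise does not depend on $e$ by Assumption \ref{ass:encourage}(c) (Independence, $e\indep\varepsilon(U)\mid X$). Combining these, for any fixed $x$ and any level $e$ of the encouragement,
\begin{equation}
\mathbb{E}[Y\mid e,T=t,X=x] = g_\Psi(t,x) + \mathbb{E}[\varepsilon(U)\mid X=x] = h_\theta(t,x),
\end{equation}
\emph{pointwise in }$t$ — wait, this would be too strong if $T$ were exogenous; the subtlety is that $\mathbb{E}[\varepsilon(U)\mid e,T,X]\neq \mathbb{E}[\varepsilon(U)\mid X]$ in general because $T$ is endogenous, so one cannot simply condition on $T$. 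Instead I would integrate out $T$: taking $\mathbb{E}[\,\cdot\mid e,X]$ of $Y=g_\Psi(T,X)+\varepsilon(U)$ and using $e\indep\varepsilon(U)\mid X$ gives
\begin{equation}
\mathbb{E}[Y\mid e,X=x] = \int g_\Psi(t,x)\,dF(t\mid e,x) + \mathbb{E}[\varepsilon(U)\mid X=x] = \int h_\theta(t,x)\,dF(t\mid e,x),
\end{equation}
which is exactly the stated integral equation once one notes $h_\theta(t,x)=g_\Psi(t,x)+\mathbb{E}[\varepsilon(U)\mid X=x]$ and $\int dF(t\mid e,x)=1$. This establishes that the true $h_\theta$ solves the Fredholm integral equation of the first kind $\mathbb{E}[Y\mid e,T,X]=\int h_\theta(T,X)\,dF(T\mid e,X)$.

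Second, for \emph{uniqueness} I would invoke the completeness (rank / strong-relevance) condition that the paper phrases as ``the relevance between $T$ and $e$ is strong.'' Concretely, suppose $h_1$ and $h_2$ both solve the integral equation; then $\delta=h_1-h_2$ satisfies $\int \delta(t,x)\,dF(t\mid e,x)=0$ for all $e$ and (a.s.) all $x$, i.e. $\mathbb{E}[\delta(T,X)\mid e,X]=0$. The completeness of the family $\{F(\cdot\mid e,x)\}_{e\in\mathcal{E}}$ for functions of $T$, conditional on $X$, forces $\delta(T,X)=0$ almost surely, giving identification of $h_\theta$. I would point out that this is precisely where the assumption ``relevance is strong'' does the work: if $e$ induces insufficient variation in the conditional distribution of $T$ given $X$ (the weak/sparse-instrument regime flagged in the introduction), completeness fails and $h_\theta$ is only set-identified.

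The main obstacle — and the reason the paper defers to \cite{newey2003instrumental} rather than reproving it — is the uniqueness/completeness step: turning the informal ``strong relevance'' into a rigorous completeness condition on the operator $(Ah)(e,x)=\int h(t,x)\,dF(t\mid e,x)$ and verifying its injectivity requires either a parametric/sieve restriction on the class of candidate $h_\theta$ or a genuine nonparametric completeness assumption, and in the genuinely discrete-encouragement case the operator has finite rank in $e$, so injectivity can only hold on a correspondingly finite-dimensional model for $h_\theta$. The existence half (that the true $h_\theta$ is \emph{a} solution) is the routine conditional-expectation manipulation sketched above; the delicate part is specifying the function space in which $h_\theta$ lives so that it is \emph{the} solution, which is exactly the trade-off between expressivity of $g_\Psi$ and strength of the encouragement that motivates pooling the observational data $\mathcal{D}^{(e_0)}$ with the experimental encouragements in the first place.
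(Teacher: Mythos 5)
Your proposal is correct and follows essentially the same route as the paper, which itself defers entirely to the completeness-based identification argument of Newey and Powell (2003): existence of a solution via the conditional-mean manipulation under the exclusion and conditional-independence assumptions, and uniqueness via completeness of $\{F(\cdot\mid e,x)\}$, which is what the informal phrase ``relevance is strong'' stands in for. Your side observations — that the left-hand side should read $\mathbb{E}[Y\mid e,X]$ rather than $\mathbb{E}[Y\mid e,T,X]$ since $T$ is the integration variable on the right, and that discrete encouragements yield a finite-rank operator so completeness can only hold on a restricted model class — are both accurate and sharper than what the paper states.
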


However, discrete encouragements $e$ may only introduce a minor exogenous disturbance to the continuous treatment that is too small to accurately estimate CATE. To address this issue, we propose a novel discrete encouragement algorithm by combining Theorems \ref{theorem:1} and \ref{theorem:3} to extend the moment conditions in Eq. \eqref{eq:gmm} to a non-linear setting:
\begin{eqnarray}
\label{eq:14}
\Scale[0.85]{
g_R\left(\theta, \xi\right)=\left[\begin{array}{c}
\mathbb{E}[({Y}^{(e_0)} - h_{\theta}( T^{(e_0)}, X^{(e_0)})) r_\xi({X}^{(e_0)})] \\
\cdots \\
\mathbb{E}[({Y}^{(e_K)} - h_{\theta}( T^{(e_K)}, X^{(e_K)}))r_\xi({X}^{(e_K)})]
\end{array}\right]
}. \\
\Scale[0.88]{
g_{\mathcal{E}}\left(\theta\right) = 
\left[\begin{array}{c}
\mathbb{E}[Y^{(e_i)} - \hat{Y}_{\theta}^{(e_i)}]  +
\mathbb{E}[Y^{(e_j)} - \hat{Y}_{\theta}^{(e_j)}]  \\
\mathbb{E}[Y^{(e_i)} - \hat{Y}_{\theta}^{(e_i)}]  -
\mathbb{E}[Y^{(e_j)} - \hat{Y}_{\theta}^{(e_j)}] 
\\
\operatorname{Var}[Y^{(e_i)} - \hat{Y}_{\theta}^{(e_i)}]  -
\operatorname{Var}[Y^{(e_j)} - \hat{Y}_{\theta}^{(e_j)}]
\end{array}\right]_{i \not= j}}.
\label{eq:15}
\end{eqnarray}
where $\Scale[0.92]{\hat{Y}_{\theta} = h_{\theta}( T, X)}$ and $r_\xi(\cdot)$ is the representations of $X$, providing non-linear moments. Eq. \eqref{eq:15} ensures the expectation of residual is zero and independent of $\mathcal{E}$.
\begin{eqnarray} \label{eq:ours}
\Scale[0.87]{
\left({\theta}^\ast, {\xi}^\ast \right)
= 
\arg \min_{\hat{\theta}} \sup_{\hat{\xi}}
\left[l(Y,\hat{Y}_{\theta}) 
+
g_R^\prime W_R g_R
+ 
g_{\mathcal{E}}^\prime W_{\mathcal{E}} g_{\mathcal{E}}
\right]
},
\end{eqnarray}

where $l(\cdot)$ represents the cross-entropy loss for binary outcomes or mean squared error for continuous outcomes, while the moments constraints $g_R^\prime W_R g_R$ and $g_{\mathcal{E}}^\prime W_{\mathcal{E}} g_{\mathcal{E}}$ act as penalties aiding in the estimation of $\hat{Y}_\theta$, where $\{W_R,W_{\mathcal{E}}\}$ are non-negative definite weighting matrices, with the optimal $W^\ast$ determined by the moments covariance matrix.

\begin{corollary} \label{corollary:1}
    Under Assumptions \ref{ass:encourage} \& \ref{ass:additive}, the result of the estimated $\theta^\ast$ in Eq. \eqref{eq:ours} equals ${ h  }_{\theta}(T,X)$ .
\end{corollary}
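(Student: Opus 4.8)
\textbf{Proof proposal for Corollary \ref{corollary:1}.}

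The plan is to show that the minimizer $\theta^\ast$ of the objective in Eq.~\eqref{eq:ours} recovers the transformed outcome function $h_\theta(T,X) = g_\Psi(T,X) + \mathbb{E}[\varepsilon(U)\mid X]$ by verifying two things: first, that the true $h_\theta$ makes the overall objective attain its global minimum, and second, that no other function can do so. I would begin by recalling that under Assumptions \ref{ass:encourage} and \ref{ass:additive}, Theorem \ref{theorem:3} already establishes that $h_\theta$ is the \emph{unique} solution to the integral equation $\mathbb{E}[Y\mid e,T,X] = \int h_\theta(T,X)\,dF(T\mid e,X)$ whenever the relevance between $T$ and $e$ is strong. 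So the corollary is essentially the claim that the empirical GMM-style objective in Eq.~\eqref{eq:ours} is a faithful finite-sample (population-level) surrogate for that integral equation together with the independence-across-environments constraint.

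The key steps, in order, would be: (1) Show that at the true parameters, $Y^{(e_k)} - h_\theta(T^{(e_k)},X^{(e_k)}) = \varepsilon(U^{(e_k)}) - \mathbb{E}[\varepsilon(U^{(e_k)})\mid X^{(e_k)}]$, which by construction has conditional mean zero given $X$; hence $\mathbb{E}[(Y^{(e_k)} - h_\theta(T^{(e_k)},X^{(e_k)}))\,r_\xi(X^{(e_k)})] = 0$ for every representation $r_\xi$, so $g_R(\theta,\xi)=0$ and the inner supremum over $\hat\xi$ is harmless at the truth. (2) Show that by the Exclusion and Independence conditions of Assumption \ref{ass:encourage}, the residual $Y^{(e_k)} - h_\theta(T^{(e_k)},X^{(e_k)})$ depends only on $U$ and $X$ and its law does not vary with $e$, so all the moment differences in Eq.~\eqref{eq:15} — the mean differences and the variance differences across $e_i \neq e_j$ — vanish, i.e.\ $g_\mathcal{E}(\theta)=0$. (3) Observe that the fitting loss $l(Y,\hat Y_\theta)$ is minimized (over all candidate functions consistent with the moment constraints) precisely by the conditional-expectation-type object $h_\theta$, so the full objective is driven to its floor at $\theta^\ast = \theta$. (4) For uniqueness, argue by contradiction: any $\tilde\theta$ achieving the same minimum must also satisfy $g_R(\tilde\theta,\xi)=0$ for all $\xi$ and $g_\mathcal{E}(\tilde\theta)=0$; combining the $g_R$ conditions (which pin down $\mathbb{E}[Y\mid T,X]$-type relations over the pooled data) with the cross-environment conditions reproduces the integral equation of Theorem \ref{theorem:3}, whose solution is unique, forcing $\tilde\theta \equiv \theta$.

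The main obstacle I anticipate is step (4), the uniqueness/identification argument: the moment conditions $g_R$ use only a finite family of test functions $r_\xi(X)$ rather than the full conditional expectation operator, so one must either invoke a completeness-type condition on $r_\xi$ (so that $\mathbb{E}[\Delta(T,X)\,r_\xi(X)]=0$ for all admissible $\xi$ implies $\mathbb{E}[\Delta(T,X)\mid X]=0$) or lean on the strong-relevance hypothesis of Theorem \ref{theorem:3} to close the gap — and care is needed because the $\sup_{\hat\xi}$ formulation makes $r_\xi$ adversarially chosen, which helps but must be stated precisely. A secondary subtlety is reconciling the fitting loss $l(Y,\hat Y_\theta)$ with the moment penalties: $l$ alone would pull $\hat Y_\theta$ toward $\mathbb{E}[Y\mid T,X]$, which is \emph{not} $h_\theta$ in the presence of endogeneity, so the argument must make clear that it is the penalty terms $g_R$ and $g_\mathcal{E}$ (with their weighting matrices $W_R$, $W_\mathcal{E}$ kept non-degenerate) that redirect the optimum to the causal solution, with $l$ serving only to select among observationally-equivalent fits. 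Once these two points are handled, the rest is bookkeeping via Theorems \ref{theorem:1}, \ref{theorem:2}, and \ref{theorem:3}.
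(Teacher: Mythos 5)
Your proposal follows essentially the same route as the paper's proof, which is a three-sentence sketch invoking exactly your three ingredients: uniqueness of $h_\theta$ from Theorem~\ref{theorem:3}, the cross-encouragement moment condition of Eq.~\eqref{eq:15} forcing the residual to be independent of $e$, and minimization of $l(Y,\hat{Y}_\theta)$ subject to these constraints. The two obstacles you flag --- completeness of the adversarial test-function family $r_\xi$, and the fact that $l$ alone would pull $\hat{Y}_\theta$ toward $\mathbb{E}[Y\mid T,X]$ rather than $h_\theta$ under endogeneity --- are genuine, but the paper's own proof does not resolve them either, so your treatment is if anything more careful than the original.
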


\begin{proof}
    Under Assumptions \ref{ass:encourage} and \ref{ass:additive}, Theorem \ref{theorem:3} guarantees the existence of a unique solution $h_{\theta}(T, X)$, which accounts for the correlation between the additional noise and observed covariates $\mathbb{E}[\varepsilon(U) \mid X]$. Furthermore, moment condition \eqref{eq:15} guarantees that the residual ($\epsilon = Y - \hat{Y}_\theta$) remains independent of encouragements ($e$). These conditions collectively enable us to minimize the loss function, $l(Y,\hat{Y}_{\theta})$, to approximate ${ h }_{\theta}(T,X)$ accurately.
\end{proof}

Our algorithm differs from DeepGMM \cite{2019deepgmm} and AGMM \cite{lewis2020agmm}, which require $X$ to be exogenous variables. However, in real application, this condition is hard to satisfy. Besides, these methods ignore keeping the residual expectation to be zero while minimizing the regression error. 
To this end, under nonrandom encouragements and continuous treatments, we develop novel theory and algorithms for identifying and estimating CATE. When the covariates $X^{(e)}$ shift slightly across encouragements, we reweight samples to estimate causal effects, with the Independence Assumption \ref{ass:encourage}(c).

\section{Methodology}
Combining observational and encouragement data in $\mathcal{D} = \{ \mathcal{D}^{(e_k)} \}_{e_k \in \mathcal{E}}$, we follow the theoretical insights from the previous sections to train neural networks $h_{\theta}$ with moment constraints for Encouragement-based Counterfactual Regression (EnCounteR). Specifically, our model's overall architecture comprises the following components: (1) Covariate balancing under Independence Assumption \ref{ass:encourage}(c); (2) full moment constraints with adversarial representation matrices; (3) counterfactual regression with moment constraints. Next, we will introduce each module step by step.

\subsection{Reweighting for Covariate Balance}
\label{sec:41}
As depicted in Figure \ref{fig:abs}, we collect large observational data $\mathcal{D}^{(e_0)}$ from previous samples and implement $K$ encouragements $\{\mathcal{D}^{(e_{k})}\}_{1 \leq k \leq K}$ in new samples to examine the causal effect of $T$ on $Y$. However, samples under different encouragements may exhibit slight covariate shifts, such as minor differences between two different classes in the same school. Therefore, we introduce the following Reweighting module to balance observed covariates across various environments: 
\begin{eqnarray}
\Scale[0.85]{
\mathcal{L}_{\omega} = \sum_{j \not = k}
(\mathbb{E}_{\omega}X^{(e_j)} - \mathbb{E}_{\omega}X^{(e_k)})^2 
+ 
(\text{Cov}_{\omega}X^{(e_j)} - \text{Cov}_{\omega}X^{(e_k)})^2}, \nonumber 
\end{eqnarray}
\vspace{-19pt}
\begin{eqnarray}
\Scale[0.9]{
\mathbb{E}_{\omega}X^{(e_k)} = \omega^{(e_k)^\prime} X^{(e_k)}, \text{Cov}_{\omega}X^{(e_k)} = {\tilde{X}^{(e_k)^\prime}} \omega \tilde{X}^{(e_k)}}, \label{eq:17}\\
\Scale[0.9]{
\omega^{(e_k)} = [(1+3\sigma(w^{(e_k)}))/2] / [\sum_i^{n_j}(1+3\sigma(w^{(e_k)}))/2]}, \nonumber
\end{eqnarray}
where $\sigma(\cdot)$ is the sigmoid function, and $w^{(e_k)}$ are trainable parameters with $n_k$ units. The term $\frac{1+3\sigma(w)}{2} \in [\frac{1}{2}, 2]$ serves to limit extreme values during the reweighting process.

\begin{algorithm*}[t]
    \caption{EnCounteR: Encouragement-Based Counterfactual Regression}
    \label{algorithm}
    \begin{algorithmic}
	\STATE \textbf{Input:} Encouragement designs $\mathcal{D} = \{ \mathcal{D}^{(e_k)} \}_{e_k \in \{e_0,e_1,\cdots,e_K\}}$, each with $\mathcal{D}^{(e_k)} = \{x_i^{(e_k)}, t_i^{(e_k)}, y_i^{(e_k)} \}_{i=1}^{n_k}$; Hyper-parameters $\{\text{d}_\text{h},\text{d}_\text{r},\alpha\}$;Trainable Weighting Vectors $\omega^{(e_k)} = \frac{1+3\sigma(w^{(e_k)})} {\sum_i^{n_k}(1+3\sigma(w^{(e_k)}))}$ with default $w^{(e_k)}=1$; Counterfactual Regression Network $h_\theta(\cdot)$ with Trainable Parameters $\theta$; Adversarial Representation Network $r_\xi(\cdot)$ with Trainable Parameters $\xi$; Reweighting-Training-Epoch $\mathcal{I}_1=\text{10}$; Adversarial-Training-Epoch $\mathcal{I}_2=\text{100}$; Full-Training-Epoch $\mathcal{I}_3=\text{1,000}$. 
        \STATE \textbf{Output:} Counterfactual Outcome Function $\hat{Y}_\theta(t, X)=h_{\theta}(do(t), X)$, and Conditional Average Treatment Effect $\text{CATE}(t, X)=\hat{Y}_\theta(t, X)-\hat{Y}_\theta(0, X)$. 
	\STATE \textbf{Loss function:} $\mathcal{L}_\omega$ in Eq. \eqref{eq:17}, $\mathcal{L}_R$ in Eq. \eqref{eq:20},  and $\mathcal{L}=\mathcal{L}_{\text{REG}} + \alpha (\mathcal{L}_{\mathcal{E}} + \mathcal{L}_{X} + \mathcal{L}_{R})$ in Eq. \eqref{eq:objective}.
	\STATE \textbf{Reweighting for Covariate Balance:}
	\FOR{$\text{itr}=1$ {\bfseries to} $\mathcal{I}_1$}
        \STATE $\mathbb{E}_{\omega}[X^{(e_k)}] = \sum_i^{n_k}\omega_i^{(e_k)} x_i^{(e_k)}$, 
        \STATE $\text{Cov}_{\omega}[X^{(e_k)}] = \left[\sum_i^{n_k}\omega_i^{(e_k)}(x_{i,a}^{(e_k)}-\mathbb{E}_{\omega}[X^{(e_k)}_a])(x_{i,b}^{(e_k)}-\mathbb{E}_{\omega}[X^{(e_k)}_b])\right]_{1 \leq a,b \leq \text{d}_\text{x}}$, where $a$ denotes $a$-th variable in $X$, 
	\STATE $\mathcal{L}_{\omega} = \sum_{j \not = k}
        (\mathbb{E}_{\omega}[X^{(e_j)}] - \mathbb{E}_{\omega}[X^{(e_k)}])^2 
        + 
        (\text{Cov}_{\omega}[X^{(e_j)}] - \text{Cov}_{\omega}[X^{(e_k)}])^2$,
	\STATE update $\omega \leftarrow {\rm{Adam}}\{\mathcal{L}_{\omega}\}$ using Adaptive Moment Estimation.
	\ENDFOR
	\STATE \textbf{Counterfactual Regression:}
	\FOR{$\text{itr}=1$ {\bfseries to} $\mathcal{I}_3$}
        \IF{$\text{itr} \leq \mathcal{I}_2$}
        \STATE $ \{X^{(e_k)}, T^{(e_k)}, Y^{(e_k)} \}_{0 \leq k \leq K} \rightarrow 
        \mathcal{L}_{R} =  g_R^\prime\left(\theta, \xi\right) \cdot W_R \cdot g_R\left(\theta, \xi\right)$,
        \STATE update $\xi \leftarrow {\rm{Adam}}\{-\mathcal{L}_R\}$ in representation network $h_\theta(\cdot)$ using Adaptive Moment Estimation.
        \ENDIF
	\STATE $ \{X^{(e_k)}, T^{(e_k)}, Y^{(e_k)} \}_{0 \leq k \leq K} \rightarrow 
        \mathcal{L}=\mathcal{L}_{\text{REG}} + \alpha (\mathcal{L}_{\mathcal{E}} + \mathcal{L}_{X} + \mathcal{L}_{R})$,
        \STATE update $\theta \leftarrow {\rm{Adam}}\{\mathcal{L}\}$ in counterfactual regression $h_\theta(\cdot)$ using Adaptive Moment Estimation
	\ENDFOR
\end{algorithmic}
\end{algorithm*}

\subsection{Moment Constraint Learning}
Following the weight $\omega$ from Eq. \eqref{eq:17}, we define $\mathbb{E}{\omega}$ as weighted expectation and $\text{Var}{\omega}$ as weighted variance, constructing moment constraints to learn $h_\theta(T, X)$ and $r_\xi(X)$.

(I) Encouragement-Independent Moments:
\begin{eqnarray} \label{eq:18}
\mathcal{L}_{\mathcal{E}} =  g_{\mathcal{E}}^{\prime}\left(\theta\right) \cdot W_{\mathcal{E}} \cdot g_{\mathcal{E}}\left(\theta\right), 
\end{eqnarray}
\vspace{-16pt}
\begin{eqnarray}
\Scale[0.88]{
g_{\mathcal{E}}\left(\theta\right) = 
\left[\begin{array}{c}
\mathbb{E}_{\omega}[Y^{(e_i)} - \hat{Y}_{\theta}^{(e_i)}]  +
\mathbb{E}_{\omega}[Y^{(e_j)} - \hat{Y}_{\theta}^{(e_j)}]  \\
\mathbb{E}_{\omega}[Y^{(e_i)} - \hat{Y}_{\theta}^{(e_i)}]  -
\mathbb{E}_{\omega}[Y^{(e_j)} - \hat{Y}_{\theta}^{(e_j)}] 
\\
\operatorname{Var}_{\omega}[Y^{(e_i)} - \hat{Y}_{\theta}^{(e_i)}]  -
\operatorname{Var}_{\omega}[Y^{(e_j)} - \hat{Y}_{\theta}^{(e_j)}]
\end{array}\right]_{i \not= j}}, \nonumber
\end{eqnarray}
where $\hat{Y}_{\theta} = h_{\theta}(T,X)$. Eq. \eqref{eq:18} guarantees that the residual ($\epsilon = Y - \hat{Y}_\theta$) remains independent of encouragements ($e$).

(II) Covariate-Independent Moments:
\begin{eqnarray} \label{eq:19}
\mathcal{L}_{X} =  g_X^\prime\left(\theta, \xi\right) \cdot W_X \cdot g_X\left(\theta, \xi\right),
\end{eqnarray}
\vspace{-16pt}
\begin{eqnarray}
\Scale[0.85]{
g_X\left(\theta, \xi\right)=\left[\begin{array}{c}
\mathbb{E}_{\omega}[({Y}^{(e_0)} - h_{\theta}( T^{(e_0)}, X^{(e_0)})) \tilde{X}^{(e_0)}] \\
\cdots \\
\mathbb{E}_{\omega}[({Y}^{(e_K)} - h_{\theta}( T^{(e_K)}, X^{(e_K)})) \tilde{X}^{(e_K)}]
\end{array}\right]
}.
\nonumber
\end{eqnarray}
Equation \eqref{eq:19} ensures that the residual ($\epsilon = Y - \hat{Y}_\theta$) and covariates are linearly independent.

(III) Adversarial Representation-Independent Moments:
\begin{eqnarray} \label{eq:20}
\mathcal{L}_{R} =  g_R^\prime\left(\theta, \xi\right) \cdot W_R \cdot g_R\left(\theta, \xi\right),
\end{eqnarray}
\vspace{-16pt}
\begin{eqnarray}
\Scale[0.85]{
g_R\left(\theta, \xi\right)=\left[\begin{array}{c}
\mathbb{E}_{\omega}[({Y}^{(e_0)} - h_{\theta}( T^{(e_0)}, X^{(e_0)})) r_\xi({X}^{(e_0)})] \\
\cdots \\
\mathbb{E}_{\omega}[({Y}^{(e_K)} - h_{\theta}( T^{(e_K)}, X^{(e_K)}))r_\xi({X}^{(e_K)})]
\end{array}\right]
}.
\nonumber
\end{eqnarray}
In complex non-linear settings, the underlying independence assumptions typically entail an infinite set of moment conditions. Consequently, we employ Adversarial Representation Learning to learn non-linear factors $R = r_{\xi}(X) \in \mathbb{R}^{\text{d}_\text{r}}$ for adaptively constructing the top-$\text{d}_\text{r}$ moment conditions in minimax criterion (see Eq. \eqref{eq:objective}). In Eqs. (\ref{eq:18}-\ref{eq:20}), $W_{\mathcal{E}}$, $W_X$ and $W_R$ are non-negative definite weighting matrices, and the optimal $W^\ast$ depends on the moments covariance matrix.

\subsection{Counterfactual Regression} \label{sec:regression}
Before proceeding with counterfactual regression $h_\theta(\cdot)$, we conduct a statistical test to check if the mean and covariance of $X$ are independent of encouragements; if not, we perform a preprocessing step to learn $\omega$ for achieving covariate balance, as detailed in Section \ref{sec:41}.
We employ two-layer neural networks with ELU activation, where each layer comprises $\text{d}_\text{h}$ hidden units for $\text{d}_\text{r}$-dimensional Representation $R=r_\xi(X)$ and Counterfactual Regression $\hat{Y}_\theta=h_\theta(T,X)$:
\begin{eqnarray}
\mathcal{L}_{\text{REG}} =  \mathbb{E}_{\omega}[l(Y,h_{\theta}(T, Y))].
\end{eqnarray}
Following Theorems \ref{theorem:3} and Corollary \ref{corollary:1}, the complete objective function is formulated as follows: 
\begin{eqnarray} \label{eq:objective}
\Scale[1.0]{
\arg\min_{{\theta}} \sup_{{\xi}}
\mathcal{L} = \mathcal{L}_{\text{REG}} 
+ \alpha (\mathcal{L}_{\mathcal{E}}
+ \mathcal{L}_{X} 
+ \mathcal{L}_{R}) 
},
\end{eqnarray}
where $\alpha$ is a hyper-parameter to control the strength of moments constraints.

\subsection{Implementation}

In this paper, we use two-layer neural networks with ELU activation, with each layer containing $\text{d}_\text{h}$ hidden units, for both Counterfactual Regression $\hat{Y}_\theta=h_\theta(T,X)$ and $\text{d}_\text{r}$-dimensional Representation $R=r_\xi(X)$.  We adopt full-batch training for the proposed EnCounteR algorithm, optimize it with the objective function \eqref{eq:objective}, and set the maximum number of training epochs to 1,000. EnCounteR contains three hyperparameters, i.e., $\text{d}_\text{r} \in \{1,5,8,10,12,20\}$, $\text{d}_\text{h} \in \{16,32,64,128,256\}$, and $\alpha \in \{1,2,5,12,15,20\}$. We utilize the minimum regression error on the validation dataset to optimize hyper-parameters. The pseudocode is placed in Algorithm \ref{algorithm}.

\textbf{Hardware used}: Ubuntu 16.04.3 LTS operating system with 2 * Intel Xeon E5-2660 v3 @ 2.60GHz CPU (40 CPU cores, 10 cores per physical CPU, 2 threads per core), 256 GB of RAM, and 4 * GeForce GTX TITAN X GPU with 12GB of VRAM. \textbf{Software used}: Python 3.7.15 with TensorFlow 1.15.0, Pytorch 1.7.1, and NumPy 1.18.0.

\begin{figure*}[t]
\begin{center}
\includegraphics[width=0.95\linewidth]{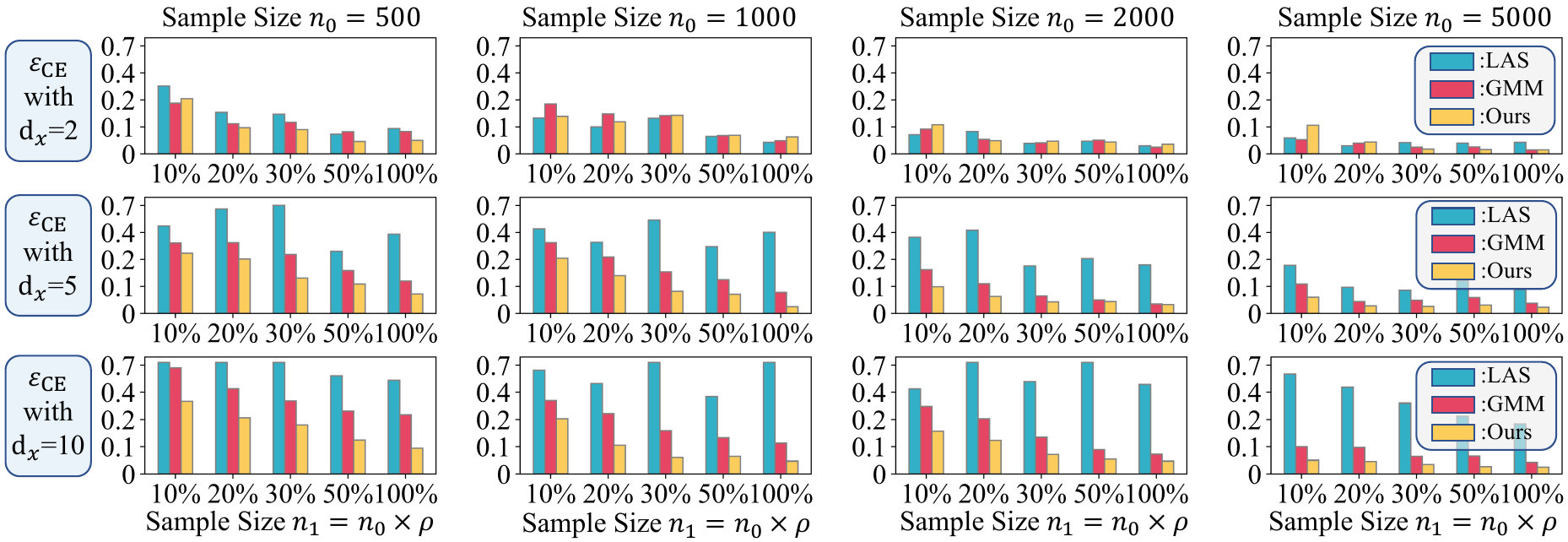}
\end{center}
\caption{Results ($\varepsilon_{\text{CE}}$) of LAS, GMM, and Our EnCounteR in Linear Simulations, with varying sample sizes $n_0 \in \{500,1000,2000,5000\}$ for observational dataset $\mathcal{D}^{(e_0)}$ and varying sample sizes $n_1 = n_0 \times \rho$ with $\rho = \{10\%,20\%,30\%,50\%,100\%\}$ for encouragement experiments $\mathcal{D}^{(e_1)}$ across various dimensions $\text{d}_\text{x} = \{2,5,10\}$ of $X$.
}
\label{fig:linear}
\end{figure*}

\section{Empiracal Experiments}
\subsection{Baselines and Metrics}

We compare the proposed \textbf{EnCounteR} algorithm with two groups of methods. One group is \emph{instrument-based methods}: (1) 
\textbf{KernelIV} \cite{singh2019kernel} and \textbf{DualIV} \cite{muandet2020dual}  implement kernel ridge regression derived from reproducing kernel Hibert spaces; (2) \textbf{DeepGMM} \cite{2019deepgmm} and \textbf{AGMM} \cite{lewis2020agmm} construct structural functions and search moment conditions via adversarial training; (3) \textbf{DeepIV} \cite{hartford2017deep}, \textbf{DFIV} \cite{xu2020dfiv} and \textbf{CBIV} \cite{wu2022instrumental} run two-stage regression using deep neural networks. The other group is \emph{covariate-based methods}: (1) \textbf{CFRNet} \cite{shalit2017estimating} and \textbf{DRCFR} \cite{hassanpour2020learning} use mutual information to learn balanced representations in continuous cases, while \textbf{VCNet} \cite{nie2021vcnet} is tailored to continuous treatment, preserving the continuity of the estimated counterfactual curve; (2) \textbf{CEVAE} \cite{louizos2017causal} and \textbf{TEDVAE} \cite{zhang2021treatment} employ latent variable modeling to concurrently estimate the latent space summarizing confounders and the causal effects; (3) \textbf{KerIRM} and \textbf{KerHRM} aim to identify causally invariant relationships in different environments, with the former using known encouragement labels and the latter not using them \cite{arjovsky2019invariant,liu2021heterogeneous,liu2021kernelized}. 
Additionally, we employ the \textbf{VANILLA} network regression as a baseline. 

In this section, we use three key metrics for evaluation purposes: $\varepsilon_{\text{CE}} = |\hat{\psi}_t-\psi_t |$ measures causal parameter estimation accuracy in linear simulations; $\varepsilon_{\text{CFR}} = \mathbb{E}(\hat{Y}_\theta(do(t),X)-Y(do(t),X) )^2$ assesses the precision of counterfactual outcome predictions using mean square error, where $do(t)$ denotes do operations randomly sampled from a uniform distribution $\text{Unif}[0, 1]$; and the Precision in Estimation of Heterogeneous Effect is measured by $\varepsilon_{\text{PEHE}} = \sqrt{
\mathbb{E}(\text{CATE}(do(t),X) - \text{CATE}(do(0),X))^2
}$.

\subsection{Experiments on Linear Simulations}
\label{sec:lineara}
\paragraph{Datasets} 
In linearity scenarios, we collect observational samples $\mathcal{D}^{(e_0)}$ with varying sizes $n_0 \in \{500,1000,2000,5000\}$. Additionally, we conduct a single encouragement experiment $e_1$ on a smaller dataset $\mathcal{D}^{e_1}$, where we manipulate the experimental data size by setting $n_1=n_0 \times \rho$ with different proportions $\rho = \{10\%,20\%,30\%,50\%,100\%\}$ to investigate the impact of sample size on performance of our EnCounteR. Subsequently, we generate a combined dataset $\{\mathcal{D}^{(e_k)}\}_{k \in \{0,1\}} = \{X^{(e_k)}, U^{(e_k)}, T^{(e_k)}, T^{(e_k)}\}_{k \in \{0,1\}}$ with $T^{(e_k)} = \phi_{x}^{(e_k)\prime} X^{(e_k)} +\phi_{u}^{(e_k)\prime} U^{(e_k)}$ and $Y^{(e_k)} = \psi_t T^{(e_k)} + \psi_{x}^\prime X^{(e_k)} +\psi_{u}^\prime U^{(e_k)}$,
where $\text{d}_\text{x}$-dimensional random vector $X$ and $\text{d}_u$-dimensional $U$ are generated from a Normal Distribution $\mathcal{N}(0,1)$ with common covariance of 0.3, and we set $\text{d}_\text{x} \in \{2,5,10\}$ and $\text{d}_\text{u} = 2$. The corresponding coefficients $\{\phi_{x}^{(e_k)}, \phi_{u}^{(e_k)}, \psi_{x}, \psi_{u}\}$ are independently sampled from a Uniform distribution $\text{Unif}(0,1)$. In our experiments, $\psi_t$ represents the causal parameter of interest, and we set it to $\psi_t=0.5$. We leverage $\{\mathcal{D}^{(e_k)}\}_{k \in \{0,1\}}$ as training data, reserving 10\%-30\% of $\mathcal{D}^{(e_0)}$ as validation data, and generate 20,000 additional samples with random treatments $do(t) \sim \text{Unif}[0, 1]$ and its corresponding outcome $Y(do(t),X)$ as testing data. 
We conduct 10 independent replications.

\begin{table*}[t]
{
\caption{Results ($\text{mean}_{\pm \text{std}}$) of $\epsilon_{\text{CFE}}$ and $\epsilon_{\text{PEHE}}$ on Simulation, IHDP and ACIC Datasets.}
\label{tab:main}
\begin{center}
\setlength{\tabcolsep}{2mm}
{\small
\begin{tabular}{c|cc|cc|cc}
\toprule
 & \multicolumn{2}{c|}{\bf Simulation (\textsc{Mult})} & \multicolumn{2}{c|}{\bf IHDP}
& \multicolumn{2}{c}{\bf ACIC} \\
\midrule
Methods & \bf
$\epsilon_{\text{CFE}}$ & \bf $\epsilon_{\text{PEHE}}$ & \bf $\epsilon_{\text{CFE}}$ & \bf $\epsilon_{\text{PEHE}}$ & \bf $\epsilon_{\text{CFE}}$ & \bf $\epsilon_{\text{PEHE}}$ \\
\midrule

\bf KernelIV & $ 17.44_{\pm 2.147} $ & $ 0.611_{\pm 0.153} $  & $ 3.808_{\pm 1.279} $ & $ 0.581_{\pm 0.046} $  & $ 38.82_{\pm 2.457} $ & $ 0.602_{\pm 0.023} $ \\
\bf DualIV & $ 92.64_{\pm 44.39} $ & $ 2.454_{\pm 0.679} $  & $ 19.60_{\pm 4.877} $ & $ 2.537_{\pm 0.248} $  & $ 28.41_{\pm 3.384} $ & $ 0.752_{\pm 0.047} $ \\
\bf DeepGMM & $ 6.340_{\pm 2.177} $ & $ 0.584_{\pm 0.105} $  & $ 1.967_{\pm 0.514} $ & $ 0.478_{\pm 0.029} $  & $ 10.09_{\pm 1.798} $ & $ 0.551_{\pm 0.070} $ \\
\bf AGMM & \underline{$ 5.941_{\pm 0.994} $} & $ \underline{0.274_{\pm 0.045}} $  & $ 1.556_{\pm 0.252} $ & $ 0.414_{\pm 0.033} $  & $ 13.84_{\pm 1.340} $ & $ 0.375_{\pm 0.018} $ \\
\bf DeepIV & $ 19.13_{\pm 2.327} $ & $ 0.662_{\pm 0.021} $  & $ 2.065_{\pm 0.305} $ & $ 0.642_{\pm 0.024} $  & $ 40.79_{\pm 13.15} $ & $ 0.605_{\pm 0.031} $ \\
\bf DFIV & $ 11.73_{\pm 0.894} $ & $ 0.563_{\pm 0.025} $  & $ 2.928_{\pm 0.500} $ & $ 0.476_{\pm 0.025} $  & $ 24.78_{\pm 3.108} $ & $ 1.247_{\pm 0.097} $ \\
\bf CBIV & $ 11.61_{\pm 2.675} $ & $ 0.551_{\pm 0.116} $  & $ 6.540_{\pm 1.465} $ & $ 0.760_{\pm 0.195} $  & $ 11.37_{\pm 3.168} $ & $ 0.414_{\pm 0.058} $ \\
\midrule

\bf CFRNet & $ 6.600_{\pm 0.606} $ & $ 0.290_{\pm 0.041} $  & $ 3.155_{\pm 2.893} $ & $ 0.482_{\pm 0.166} $  & $ 9.305_{\pm 1.370} $ & $ 0.387_{\pm 0.079} $ \\
\bf DRCFR & $ 6.410_{\pm 0.533} $ & $ 0.310_{\pm 0.027} $  & $ 0.866_{\pm 0.298} $ & $ 0.447_{\pm 0.034} $  & $ 9.329_{\pm 1.685} $ & $ 0.348_{\pm 0.025} $ \\
\bf VCNet & $ 7.490_{\pm 0.289} $ & $ 0.309_{\pm 0.026} $  & \underline{$ 0.611_{\pm 0.128} $} & \underline{$ 0.229_{\pm 0.031} $} & \underline{$ 8.298_{\pm 2.338} $} & \underline{$ 0.263_{\pm 0.072} $} \\
\bf CEVAE & $ 9.899_{\pm 0.592} $ & $ 0.525_{\pm 0.053} $  & $ 4.585_{\pm 0.539} $ & $ 0.722_{\pm 0.038} $  & $ 21.88_{\pm 2.148} $ & $ 0.867_{\pm 0.061} $ \\
\bf TEDVAE & $ 16.24_{\pm 0.379} $ & $ 0.702_{\pm 0.013} $  & $ 6.546_{\pm 0.768} $ & $ 0.691_{\pm 0.023} $  & $ 29.77_{\pm 2.955} $ & $ 0.764_{\pm 0.019} $ \\
\bf KerIRM & $ 13.12_{\pm 2.589} $ & $ 0.479_{\pm 0.073} $  & $ 3.696_{\pm 0.978} $ & $ 0.649_{\pm 0.042} $  & $ 23.93_{\pm 2.944} $ & $ 0.627_{\pm 0.031} $ \\
\bf KerHRM & $ 17.94_{\pm 3.808} $ & $ 0.547_{\pm 0.083} $  & $ 5.383_{\pm 1.588} $ & $ 0.581_{\pm 0.077} $  & $ 24.61_{\pm 2.974} $ & $ 0.659_{\pm 0.093} $ \\

\midrule 

\bf VANILLA & 
$ 7.512_{\pm 1.048} $ & $ 0.348_{\pm 0.067} $ 
& $ 2.068_{\pm 1.917} $ & $ 0.510_{\pm 0.323} $ 
& $ 19.66_{\pm 14.98} $ & $ 0.656_{\pm 0.317} $ 
\\

\bf EnCounteR  & $ \mathbf{ 4.816_{\pm 0.609} } $ & $ \mathbf{ 0.210_{\pm 0.026} } $ 
& $ \mathbf{ 0.582_{\pm 0.130} } $ & $ \mathbf{ 0.188_{\pm 0.021} } $ 
& $ \mathbf{ 5.751_{\pm 0.606} } $ & $ \mathbf{ 0.186_{\pm 0.038} } $ 
\\

\bottomrule
\end{tabular}
}
\end{center}
}
\end{table*}

\label{sec:ex-linear}
\paragraph{Results} In the linear simulation experiments (Figure \ref{fig:linear}), we employ three parametric estimators: the linear analytical solution (\textbf{LAS}) from Theorem \ref{theorem:1}, the \textbf{GMM} reformulation in Eq. \eqref{eq:gmm}, and our \textbf{EnCounteR} in Eq. \eqref{eq:ours}. 
The LAS method relies on a substantial variation, $\beta_{T|X}^{(e_1)} - \beta_{T|X}^{(e_0)}$, and is limited to estimating $\phi_t$ using only a single $X$ variable. As data dimensions increase in Figure \ref{fig:linear}, the influence of variations in single $X$ on $T$ diminishes, which would introduce larger errors in $\varepsilon_{\text{CE}}$. 
Moreover, inaccuracies in estimating $\beta_{Y|X}^{(e_k)}$ and $\beta_{T|X}^{(e_k)}$ could exacerbate LAS errors by magnifying them further. To address the over-identification issue, we use GMM and EnCounteR reformulations to identify the causal parameter leveraging moments on residuals from full variables $X$. As shown in Figure \ref{fig:linear}, regardless of varying dimensions of $X$, both GMM and EnCounteR consistently exhibit robustness in estimating causal effects. Following Corollary \ref{corollary:1}, EnCounteR with novel moments (Eq. \eqref{eq:14}) yields more accurate estimates of causal parameters with lower variance. Furthermore, with varying encouragement proportions, $\rho=\{10\%,20\%,30\%,50\%,100\%\}$, our EnCounteR consistently performs well when $n_1 \geq n_0 \times 30\%$, greatly reducing the costs of the encouragement experiments and the computational expenses. Therefore, in subsequent studies, we set $n_k = n_0 \times 30\%$ for $k \geq 1$. 

\begin{figure*}[t]
\begin{center}
\includegraphics[width=0.88\linewidth]{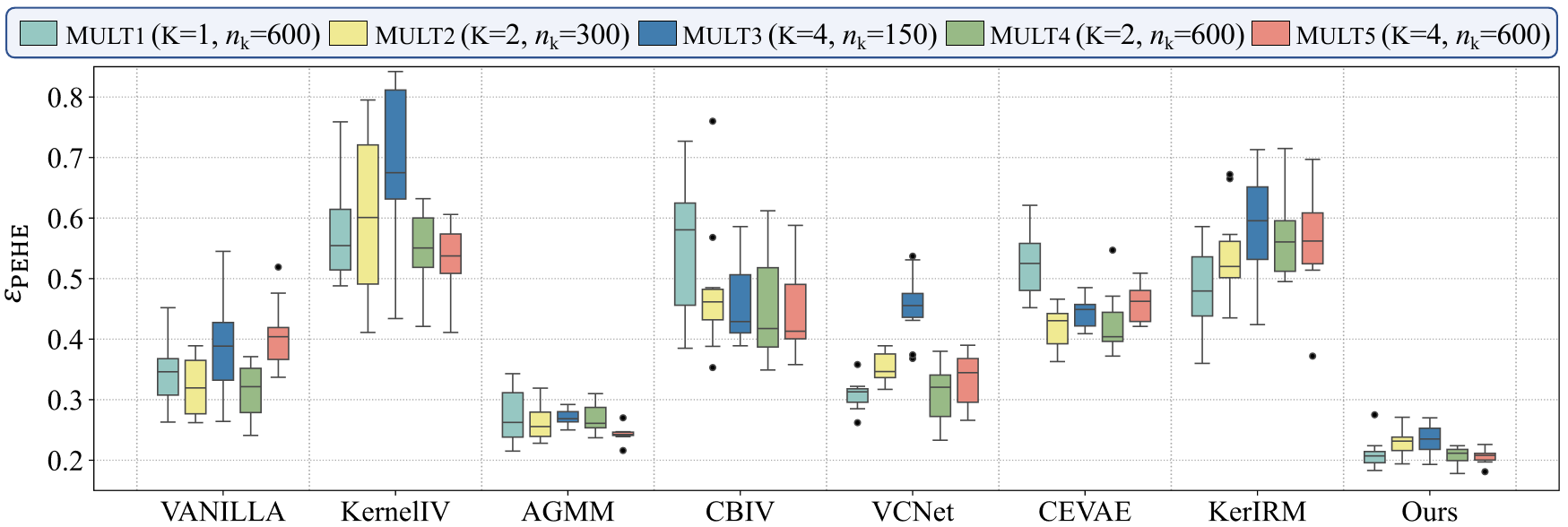}
\end{center}
\caption{Box Plot of $\epsilon_{\text{PEHE}}$ in Simulation (\textsc{Mult}s): Explore Varying Encouragements $K$ and Varying Data Volume $K \times n_k$.}
\label{fig:box}
\end{figure*}

\subsection{Experiments on Complex Datasets} \label{sec:53}

In complex non-linear setting with heterogeneous treatment effects, we evaluate the EnCounteR method on Simulations (\textsc{Mult}s, \textsc{Poly}s, \textsc{Abs}s, \textsc{Sin}s) and two widely-adopted benchmarks: \textbf{\textsc{IHDP}} \cite{hill2011bayesian,shalit2017estimating} containing 747 samples with 25 variables (selected as 5 observed confounders and 20 unmeasured confounders), \textbf{\textsc{ACIC} 2016} \cite{dorie2019automated} containing 4,802 samples with 58 variables (selected as 12 observed confounders and 46 unmeasured confounders).

\paragraph{Simulation Datasets} First, we introduce the generation process of \textbf{Simulations (\textsc{Mult}s) with Covariate Shifts} across different encouragements $ e_k \in \{e_0, e_1, \cdots, e_K\}$. For each encouragements $e_k$, we generate the observed covariates by $X^{(e_k)} \sim  \mathcal{N}(\mu_x^{(e_k)},\Sigma_x^{(e_k)}), \mu_x^{(e_k)} \sim \text{Unif}(-0.2,0.2)$ with:
\begin{eqnarray}
\nonumber
\Sigma_x \quad = \quad
\begin{pmatrix}
\sigma_x^{(e_k)} & 0.3 & \cdots & 0.3 \\
0.3 & \sigma_x^{(e_k)} & \cdots & 0.3 \\
\vdots & \vdots & \ddots & \vdots \\
0.3 & 0.3 & \cdots & \sigma_x^{(e_k)}
\end{pmatrix},
\end{eqnarray}
where $\sigma_x^{(e_k)} \sim \text{Unif}(0.7,1.3)$. Then, the unmeasured confounders would be $ U_i^{(e_k)} \sim \mathcal{N}(0.3(X^{(e_k)}_{2i-1}+X^{(e_k)}_{2i}),1) $, where subscript $i$ denotes the $i$-th variable in $U$ and $i \in \{1,2,\cdots, \text{d}_\text{u}\}$. 
In the main experiments, guided by the findings in Section \ref{sec:lineara}, we set observational data at $n_0 = \text{2,000}$ and experimental data at $n_k = 600$ for $1 \leq k \leq K$, with parameters $K = 1$, $\text{d}_\text{x} = 5$, and $\text{d}_\text{u} = 3$.
Define $C \in \mathbb{R}^{\text{d}_\text{x}+\text{d}_\text{u}}$ as the concatenation of all confounders $X$ and $U$, we generate the treatments and outcomes with multiplicative cross-terms:  
{ \small
\begin{eqnarray}
\nonumber
T^{(e_k)} & = & \left| \sum_{i=1}^{\text{d}_\text{x}+\text{d}_\text{u}-1} \left[\phi_i^{(e_k)}C_i^{(e_k)}  C_{i+1}^{(e_k)}\right] + \sum_{i=1}^{\text{d}_\text{x}+\text{d}_\text{u}} \left[\phi_{i}^{(e_k)}C_{i}^{(e_k)}\right] \right|, \\
Y^{(e_k)}_{\textsc{Mult}} & = & T^{(e_k)} \times (0.5+X_0^{(e_k)}) + \sum_{i=1}^{\text{d}_\text{x}-1} \left[\psi_iX_i^{(e_k)}  X_{i+1}^{(e_k)}\right] 
\nonumber \\ 
&+& \sum_{i=1}^{\text{d}_\text{u}-1} \left[\psi_iU_i^{(e_k)}  U_{i+1}^{(e_k)}\right] + \sum_{i=1}^{\text{d}_\text{x}+\text{d}_\text{u}} \left[\psi_iC_i^{(e_k)}\right],
\nonumber
\end{eqnarray}
}where, the coefficients $\{\phi_{1 \cdots (\text{d}_\text{x}+\text{d}_\text{u})}^{(e_k)}, \psi_{1 \cdots (\text{d}_\text{x}+\text{d}_\text{u})}\}$ are independently sampled from a Uniform distribution $\text{Unif}(0,1)$
In the above equations, we name this simulation with non-linear multiplicative cross-terms as \textbf{\textsc{Mult}}. 

\textbf{\textsc{Mult}s with Varying Encouragements $K$ and Data Volume $K \times n_k$ for $1 \leq k \leq K$}. Moreover, we generate four \textsc{Mult} datasets with more encouragements ($K>1$) and different sample sizes $n_k$, keeping $n_0=\text{2,000}$. We name the above original dataset as \textsc{Mult1} with $K=1$, $n_k=600$ and total volume $K \times n_k = 600$ for $k>0$. To further explore the effects of increased encouragements and varying data volumes, we generate four additional datasets with more encouragements ($K>1$), keeping the observational data size at $n_0=\text{2,000}$ and varying the sample sizes $n_k$ for $1 \leq k \leq K$. Keeping a fixed total volume of encouragement data at $K \times n_k = 600$ for $k>0$, we construct two additional datasets: \textsc{Mult2} with $K=2$ and $n_k=300$ for $k>0$, and \textsc{Mult3} with $K=4$ and $n_k=150$ for $k>0$. This allows us to analyze the effects of varying the number of encouragements while keeping the total volume of encouragement data constant.
With fixed size $n_k=600$, we create two additional datasets: \textsc{Mult4} with $K=2$ and a total encouragement data volume of $K \times n_k=\text{1,200}$ for $k>0$, and \textsc{Mult5} with $K=4$ and a total encouragement data volume of $K \times n_k=\text{2,400}$ for $k>0$.
This enables us to conduct a comprehensive analysis of the influence of varying numbers of encouragements and total encouragement data volumes on our study's outcomes.

\textbf{Simulations with Additional Non-linear Terms: \textsc{Poly}, \textsc{Abs} and \textsc{Sin}}. 
To simulate real-world conditions, we add three additional non-linear terms in the outcome function for assessing the EnCounteR algorithm:
{\small
\begin{eqnarray}
Y^{(e_k)}_{\textsc{Poly}} &=& Y^{(e_k)}_{\textsc{Mult}} + T^{(e_k)} \times \left(X_1^{(e_k)}\right)^2 + \sum_{i=1}^{\text{d}_\text{x}+\text{d}_\text{u}} \left[\psi_i\left(C_i^{(e_k)}\right)^2\right]  / \text{d}_\text{x}, 
\nonumber \\
Y^{(e_k)}_{\textsc{Abs}} &=& Y^{(e_k)}_{\textsc{Mult}} + T^{(e_k)} \times \left|X_1^{(e_k)}\right| + \sum_{i=1}^{\text{d}_\text{x}+\text{d}_\text{u}} \left[\psi_i\left|C_i^{(e_k)}\right|\right] / \text{d}_\text{x},
\nonumber
\\
Y^{(e_k)}_{\textsc{Sin}} &=& Y^{(e_k)}_{\textsc{Mult}} + T^{(e_k)} \times \text{sin}\left(X_1^{(e_k)}\right) + \sum_{i=1}^{\text{d}_\text{x}+\text{d}_\text{u}} \left[\psi_i\text{sin}\left(C_i^{(e_k)}\right)\right] / \text{d}_\text{x}. 
\nonumber
\end{eqnarray}
}
We name these three simulations with additional non-linear terms as \textsc{Poly}, \textsc{Abs} and \textsc{Sin}, respectively. For each data, we combine $\{\mathcal{D}^{e_k}\}_{e_k \in \mathcal{E}}$ as training data, reserving 10\%-30\% of $\mathcal{D}^{e_0}$ as validation data, and generate 20,000 additional samples with random treatments $do(t) \sim \text{Unif}[0, 1]$ and the outcomes $Y(do(t),X)$ as testing data.

\paragraph{Real-World Data}
Although massive open online courses (MOOCs) like Coursera, edX, and Udacity bring a deluge of data about student behavior in classrooms \cite{breslow2013studying,kizilcec2014encouraging,reich2015rebooting}, due to concerns over information privacy, we lack access to complete data on student behavior in MOOCs. Furthermore, based on the publicly available data, specifically \url{https://doi.org/10.7910/DVN/26147} and \url{http://moocdata.cn/data/user-activity}, we cannot construct complete encouragement data for evaluating our algorithm. Therefore, similar to previous work \cite{shalit2017estimating,yao2021survey}, we validate our algorithm on the IHDP and ACIC2016 datasets.

\textbf{IHDP}. 
The Infant Health and Development Program (IHDP\footnote{IHDP datasets are available at: \url{https://www.fredjo.com/}.}) \cite{hill2011bayesian} studies the effect of specialist home visits on the future cognitive test scores of premature infants, which comprises 747 units, with 139 in the treated group and 608 in the control group.
There are 25 pre-treatment variables ($C \in \mathbb{R}^{25}$) related to the children and their mothers. In the IHDP study, to create multi-encouragement data, the large control group is used as $\mathcal{D}^{(e_0)}$, and the small treated group as $\mathcal{D}^{(e_1)}$. We select $\text{d}_\text{x} =5$ continuous variables from the IHDP as observed covariates and use the expected potential outcomes $m_0$ for control outcomes and $m_1$ for treated outcomes as unmeasured confounding effects from the remaining $\text{d}_\text{u} = 20$ unmeasured variables. The encouraged treatments are from $T^{(e_k)} = | \sum_{i=1}^{\text{d}_\text{x}-1} [\phi_i^{(e_k)}X_i^{(e_k)} X_{i+1}^{(e_k)}] + \sum_{i=1}^{\text{d}_\text{x}} [\phi_{i}^{(e_k)}X_{i}^{(e_k)}] + m_0^{(e_k)} |$, and outcomes are determined by $Y^{(e_k)}_{\textsc{IHDP}} = T^{(e_k)} \times (0.5+X_0^{(e_k)}) + \sum_{i=1}^{\text{d}_\text{x}-1} [\psi_i X_i^{(e_k)} X_{i+1}^{(e_k)}] + \sum_{i=1}^{\text{d}_\text{x}} [\psi_i X_{i}^{(e_k)}] + m_1^{(e_k)}$. From the control group $\mathcal{D}^{(e_0)}$, We split 75 samples for validation data and another 75 for pre-testing data, leaving $n_0 = \text{458}$ samples as encouragement data with $e_0$, while maintaining $n_1 = \text{139}$ in the treated group. The pre-testing data is replicated 100 times to create 7,500 samples with random treatments $do(t) \sim \text{Unif}[0, 1]$, and for these samples, we regenerate the corresponding outcomes $Y(do(t),X)$ to be used as testing.

\textbf{ACIC2016}. 
The 2016 Atlantic Causal Inference Challenge (ACIC 2016\footnote{ACIC 2016 datasets are available at: \url{https://github.com/vdorie/aciccomp/tree/master/2016}.}) \cite{dorie2019automated} holds the causal inference data analysis challenge, which creates 4,802 units, with 858 in the treated group and 3,944 in the control group.
The two expected potential outcomes are $m_0$ for control outcomes and $m_1$ for treated outcomes. 
The covariates are real-world data from the full Infant Health and Development Program dataset, which consists of 58 variables ($C \in \mathbb{R}^{58}$).
In the above ACIC study, we use the large control group as $\mathcal{D}^{(e_0)}$, and the small treated group as $\mathcal{D}^{(e_1)}$. We select $\text{d}_\text{x} =12$ continuous variables from the ACIC as observed covariates and use the expected potential outcomes $m_0$ and $m_1$ as unmeasured confounding effects from the remaining $\text{d}_\text{u} = 46$ unmeasured variables. The encouraged treatments are from $T^{(e_k)} = | \sum_{i=1}^{\text{d}_\text{x}-1} [\phi_i^{(e_k)}X_i^{(e_k)} X_{i+1}^{(e_k)}] + \sum_{i=1}^{\text{d}_\text{x}} [\phi_{i}^{(e_k)}X_{i}^{(e_k)}] + 4m_0^{(e_k)} |$, and outcomes are determined by $Y^{(e_k)}_{\textsc{IHDP}} = T^{(e_k)} \times (0.5+X_0^{(e_k)}) + \sum_{i=1}^{\text{d}_\text{x}-1} [\psi_i X_i^{(e_k)} X_{i+1}^{(e_k)}] + \sum_{i=1}^{\text{d}_\text{x}} [\psi_i X_{i}^{(e_k)}] + 4m_1^{(e_k)}$. From the control group $\mathcal{D}^{(e_0)}$, We split 480 samples for validation data and 480 for pre-testing data, leaving $n_0 = \text{2,984}$ samples as encouragement data, while maintaining $n_1 = \text{858}$ in the treated group. We then replicate the pre-testing data 20 times, creating 9,600 samples with random treatments $do(t) \sim \text{Unif}[0, 1]$, and regenerate outcomes $Y(do(t),X)$ for testing.

\begin{table*}[t]
{
\caption{Results ($\text{mean}_{\pm \text{std}}$) on Complex Simulation with Additional \textsc{Poly}, \textsc{Abs} and \textsc{Sin} Terms.}
\label{tab:nonlinear}
\begin{center}
\setlength{\tabcolsep}{2mm}
{
\begin{tabular}{c|cc|cc|cc}
\toprule
 & \multicolumn{2}{c|}{\bf \textsc{Poly}} & \multicolumn{2}{c|}{\bf \textsc{Abs}}
& \multicolumn{2}{c}{\bf \textsc{Sin}} \\
\midrule
Methods & \bf
$\epsilon_{\text{CFE}}$ & \bf $\epsilon_{\text{PEHE}}$ & \bf $\epsilon_{\text{CFE}}$ & \bf $\epsilon_{\text{PEHE}}$ & \bf $\epsilon_{\text{CFE}}$ & \bf $\epsilon_{\text{PEHE}}$ \\
\midrule

\bf KernelIV & $ 25.31_{\pm 1.906} $ & $ 0.585_{\pm 0.040} $  & $ 21.44_{\pm 2.115} $ & $ 0.593_{\pm 0.106} $  & $ 19.67_{\pm 2.625} $ & $ 0.644_{\pm 0.189} $ \\
\bf AGMM & \underline{$ 6.502_{\pm 0.954} $} & \underline{$ 0.294_{\pm 0.022} $}  & \underline{$ 6.133_{\pm 0.896} $} & \underline{$ 0.256_{\pm 0.024} $}  & \underline{$ 6.674_{\pm 1.018} $} & \underline{$ 0.281_{\pm 0.030} $} \\
\bf CBIV & $ 11.64_{\pm 4.388} $ & $ 0.506_{\pm 0.102} $  & $ 10.01_{\pm 2.869} $ & $ 0.499_{\pm 0.123} $  & $ 10.03_{\pm 2.921} $ & $ 0.538_{\pm 0.142} $ \\

\midrule

\bf VCNet & $ 7.628_{\pm 0.605} $ & $ 0.301_{\pm 0.035} $  & $ 7.083_{\pm 0.532} $ & $ 0.283_{\pm 0.044} $  & $ 8.384_{\pm 0.317} $ & $ 0.341_{\pm 0.023} $ \\
\bf CEVAE & $ 11.93_{\pm 1.212} $ & $ 0.559_{\pm 0.032} $  & $ 10.69_{\pm 0.770} $ & $ 0.526_{\pm 0.050} $  & $ 10.74_{\pm 0.417} $ & $ 0.556_{\pm 0.035} $ \\
\bf KerIRM & $ 19.52_{\pm 2.365} $ & $ 0.506_{\pm 0.057} $  & $ 16.35_{\pm 3.090} $ & $ 0.488_{\pm 0.061} $  & $ 15.24_{\pm 2.957} $ & $ 0.495_{\pm 0.080} $ \\

\midrule

\bf VANILLA & 
$ 8.362_{\pm 0.972} $ & $ 0.352_{\pm 0.080} $ 
& $ 8.425_{\pm 0.929} $ & $ 0.343_{\pm 0.081} $ 
& $ 7.708_{\pm 1.171} $ & $ 0.348_{\pm 0.036} $ 
\\

\bf EnCounteR  & 
$ \mathbf{ 5.294_{\pm 0.434} } $ & $ \mathbf{ 0.214_{\pm 0.033} } $ 
& $ \mathbf{ 5.029_{\pm 0.446} } $ & $ \mathbf{ 0.226_{\pm 0.026} } $ 
& $ \mathbf{ 4.840_{\pm 0.616} } $ & $ \mathbf{ 0.222_{\pm 0.033} } $ 
\\
 
\bottomrule
\end{tabular}
}
\end{center}
}

\end{table*}

\paragraph{Main Results} We compare our method with baselines for estimating the counterfactual outcomes and CATE on the above datasets, each with 10 replications. The mean and standard deviation of $\varepsilon_{\text{CFR}}$ and $\varepsilon_{\text{PEHE}}$ are shown in Table \ref{tab:main}, and the optimal and second-optimal performance are \textbf{bold} and \underline{underlined}, respectively. 
First, from Table \ref{tab:main}, we can find that IV-based methods are limited in fully capturing exogenous variations in continuous treatments due to discrete encouragements, failing to precisely estimate causal effects as the exogeneity is insufficient for confounding effects. Second, covariate-based methods such as CFRNet, DRCFR, and VCNet also underperform because the unconfoundedness assumption is violated; as seen from the results, CEVAE and TEDVAE are prone to overfit; and methods including KerIRM and KerHRM fail to deal with unmeasured confounding and observed variables' entanglements. 
Third, the proposed EnCounteR outperforms all baselines across various datasets, achieving state-of-the-art performance. Compared to the \underline{second-optimal} method, our EnCounteR on Simulation, IHDP, and ACIC datasets further reduce $\epsilon_{\text{CFE}}$ by 19\%, 10\%, and 31\%, and $\epsilon_{\text{PEHE}}$ by 23\%, 18\%, and 29\%, respectively. These results highlight the scalability of our method to complex data, demonstrating its potential for real applications.

\paragraph{The scalability of our EnCounteR across varying non-linear}
To simulate more complex real-world applications, we conduct additional experiments incorporating three non-linear terms into the outcome function (described as \textsc{Poly}, \textsc{Abs}, and \textsc{Sin}). 
Observing that most traditional methods underperform in the \textsc{Mult} experiments (Table \ref{tab:main} and Figure \ref{fig:param}), we select three methods each from IV-based and covariate-based approaches for further experimental performance comparison. From the IV-based methods, we choose KernelIV \cite{singh2019kernel}, AGMM \cite{lewis2020agmm}, and CBIV \cite{wu2022instrumental}, and from the covariate-based methods, we select VCNet \cite{nie2021vcnet}, CEVAE \cite{louizos2017causal}, and KerIRM \cite{arjovsky2019invariant}. The results in Table \ref{tab:nonlinear} reveal that most traditional methods have significant estimation errors, some even worse than the basic direct regression (VANILLA) method. Only AGMM and VCNet show improvements over the VANILLA method. Our EnCounteR, however, exhibits robust and outstanding performance on more complex datasets. Compared to the optimal-second AGMM algorithm, our EnCounteR further reduces the $\epsilon_{\text{CFE}}$ by 18\%, 18\%, and 27\% and the $\epsilon_{\text{PEHE}}$ by 27\%, 11\%, and 21\% on the Simulation, IHDP, and ACIC datasets, respectively. These results highlight the scalability of our method, demonstrating its potential for real-world applications.

\paragraph{The scalability of our EnCounteR across varying encouragements $K$ and sub-data volume $n_k$} 
Moreover, we study the scalability of our EnCounteR across varying encouragements $K$ and data volume $K \times n_k=600$ with $n_0=\text{2,000}$ on \textsc{Mult1}, \textsc{Mult2}, and \textsc{Mult3}. As shown in Figure \ref{fig:box}, more encouragements $K$ and smaller $n_k$ may result in larger errors and higher variance in EnCounteR on \textsc{Mult}s. In contrast, on \textsc{Mult1}, \textsc{Mult4}, and \textsc{Mult5}, with $n_k=\text{600}$ fixed, we find that as $K$ increases, the mean error of EnCounteR on \textsc{Mult}s remains unchanged, but the variance decreases. This suggests our EnCounteR depends mainly on the largest $n_k$ and encouragements $K$, as a consequence, applying single encouragement to more units would yield better estimation.

\begin{table*}[t]
{\small
\caption{Ablation Study of EnCounteR Framework on Simulation, IHDP, and ACIC Datasets. EnCounteR is composed by four core modules: (a) $\boldsymbol{\omega}$: Sample Reweighting Module in Eq. \eqref{eq:17}; (b) $\boldsymbol{\mathcal{L}}_{\boldsymbol{\mathcal{E}}}$: Encouragement-Independent Moments in Eq. \eqref{eq:18}; (c) $\boldsymbol{\mathcal{L}}_{\boldsymbol{X}}$: Covariate-Independent Moments in Eq. \eqref{eq:19}; (d) $\boldsymbol{\mathcal{L}}_{\boldsymbol{R}}$: Adversarial Representation-Independent Moments in Eq. \eqref{eq:20}.}
\label{tab:ablation}
\begin{center}
\setlength{\tabcolsep}{1mm}{
\begin{tabular}{c|cccc|cc|cc|cc}
\toprule
& \multicolumn{4}{c|}{\bf Modules} & \multicolumn{2}{c|}{\bf Simulation (\textsc{Mult})} & \multicolumn{2}{c|}{\bf IHDP}
& \multicolumn{2}{c}{\bf ACIC} \\
\midrule
\textbf{EnCounteR} & $+\boldsymbol{\omega}$  & $+\boldsymbol{\mathcal{L}}_{\boldsymbol{\mathcal{E}}}$ & \bf $+\boldsymbol{\mathcal{L}}_{\boldsymbol{X}}$  & $+\boldsymbol{\mathcal{L}}_{\boldsymbol{R}}$  & \bf $\epsilon_{\text{CFE}}$ & \bf $\epsilon_{\text{PEHE}}$ & \bf $\epsilon_{\text{CFE}}$ & \bf $\epsilon_{\text{PEHE}}$ & \bf $\epsilon_{\text{CFE}}$ & \bf $\epsilon_{\text{PEHE}}$ \\
\midrule

$\boldsymbol{\mathcal{L}}_\textbf{REG}$ & 
& & & & 
$ 7.512_{\pm 1.048} $ & $ 0.348_{\pm 0.067} $ 
& $ 2.068_{\pm 1.917} $ & $ 0.510_{\pm 0.323} $ 
& $ 19.66_{\pm 14.98} $ & $ 0.656_{\pm 0.317} $ 
\\

$\boldsymbol{\mathcal{L}}_\textbf{REG}$ & 
\checkmark & & & & 
$ 7.311_{\pm 1.020} $ & $ 0.344_{\pm 0.056} $ 
& $ 1.565_{\pm 0.984} $ & $ 0.452_{\pm 0.208} $ 
& $ 14.78_{\pm 6.163} $ & $ 0.587_{\pm 0.197} $ 
\\

$\boldsymbol{\mathcal{L}}_\textbf{REG}$ & 
\checkmark & \checkmark & &  &
$ 7.841_{\pm 1.487} $ & $ 0.343_{\pm 0.082} $ 
& $ 1.306_{\pm 0.521} $ & $ 0.378_{\pm 0.124} $ 
& $ 12.31_{\pm 5.429} $ & $ 0.457_{\pm 0.126} $ 
\\

$\boldsymbol{\mathcal{L}}_\textbf{REG}$ & 
\checkmark & \checkmark & \checkmark & & 
$ 5.191_{\pm 0.533} $ & $ 0.224_{\pm 0.036} $ 
& $ 0.665_{\pm 0.213} $ & $ 0.215_{\pm 0.066} $ 
& $ 12.58_{\pm 5.421} $ & $ 0.476_{\pm 0.121} $ 
\\

$\boldsymbol{\mathcal{L}}_\textbf{REG}$ & 
\checkmark & \checkmark & & \checkmark & 
$ 5.733_{\pm 0.816} $ & $ 0.259_{\pm 0.036} $ 
& $ 0.710_{\pm 0.156} $ & $ 0.229_{\pm 0.024} $ 
& \underline{$ 5.689_{\pm 0.669} $} & \underline{$ 0.204_{\pm 0.022} $} 
\\

$\boldsymbol{\mathcal{L}}_\textbf{REG}$ & 
& \checkmark & \checkmark & \checkmark & 
\underline{$ 4.847_{\pm 0.607} $} & \underline{$ 0.220_{\pm 0.037} $} 
& \underline{$ 0.641_{\pm 0.203} $} & \underline{$ 0.199_{\pm 0.025} $} 
& $ 6.067_{\pm 0.927} $ & $ 0.218_{\pm 0.036} $ 
\\

$\boldsymbol{\mathcal{L}}_\textbf{REG}$ & 
\checkmark & \checkmark & \checkmark & \checkmark & 
$ \mathbf{ 4.816_{\pm 0.609} } $ & $ \mathbf{ 0.210_{\pm 0.026} } $ 
& $ \mathbf{ 0.582_{\pm 0.130} } $ & $ \mathbf{ 0.188_{\pm 0.021} } $ 
& $ \mathbf{ 5.751_{\pm 0.606} } $ & $ \mathbf{ 0.186_{\pm 0.038} } $ 
\\
 
\bottomrule
\end{tabular}
}
\end{center}
}
\end{table*}

\begin{figure*}[th]
\begin{center}
\includegraphics[width=1.00\linewidth]{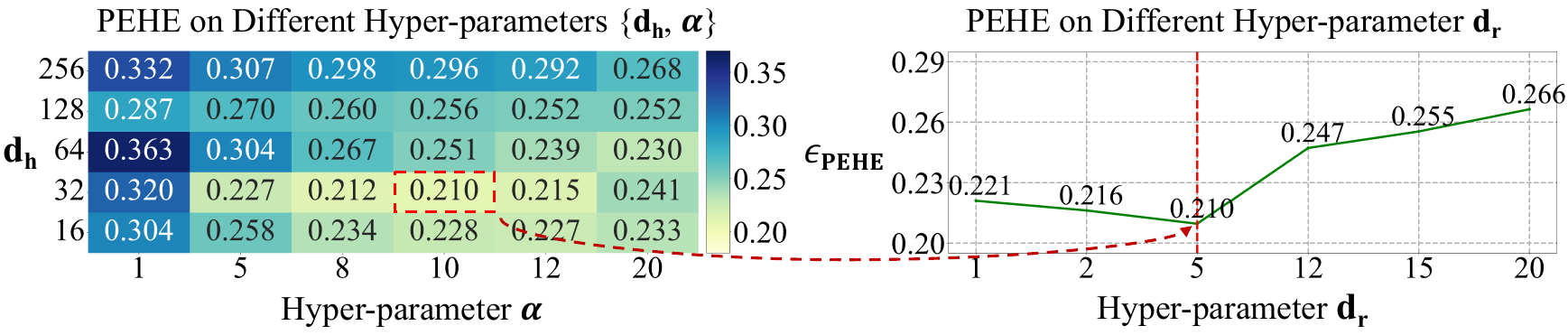}
\end{center}
\caption{Hyper-Parameter Optimization: The minimum regression error on the validation data implies the optimal hyper-parameters. The optimal hyper-parameters are $\text{d}_\text{h}=32, \text{d}_\text{r}=5, \alpha=10$ for \textsc{Mult}. }
\label{fig:param}
\end{figure*}

\begin{table*}[th]
{
\caption{Optimal Parameters on \textsc{Mult}, \textsc{Poly}, \textsc{Abs}, \textsc{Sin}, \textsc{IHDP} and \textsc{ACIC} Datasets.}
\label{tab:params}
\begin{center}
\setlength{\tabcolsep}{3mm}
{
\begin{tabular}{c|c|c|c|c|c|c}
\toprule
Params & 
\textsc{Mult} &
\textsc{Poly} &
\textsc{Abs} &
\textsc{Sin} &
\textsc{IHDP} &
\textsc{ACIC} 
\\

\midrule

$\mathbf{ \alpha }$ & 
10 &
8 &
10 &
8 &
5 &
8 
\\

$ \mathbf{ d_h }$ & 
32 &
128 &
128 &
32 &
128 &
32 
\\

$ \mathbf{ d_r }$ & 
5 &
2 &
5 &
12 &
15 &
12 
\\

\midrule
\bf $\epsilon_{\text{PEHE}}$ & 
$ \mathbf{ 0.210_{\pm 0.026} } $  &
$ \mathbf{ 0.214_{\pm 0.033} } $  &
$ \mathbf{ 0.226_{\pm 0.026} } $ &
$ \mathbf{ 0.222_{\pm 0.033} } $  &
$ \mathbf{ 0.188_{\pm 0.021} } $ &
$ \mathbf{ 0.186_{\pm 0.038} } $
\\
 
\bottomrule
\end{tabular}
}
\end{center}
}
\end{table*}

\subsection{Ablation Studies} 
EnCounteR is composed by four core modules: (a) $\boldsymbol{\omega}$: Sample Reweighting Module in Eq. \eqref{eq:17}; (b) $\boldsymbol{\mathcal{L}}_{\boldsymbol{\mathcal{E}}}$: Encouragement-Independent Moments in Eq. \eqref{eq:18}; (c) $\boldsymbol{\mathcal{L}}_{\boldsymbol{X}}$: Covariate-Independent Moments in Eq. \eqref{eq:19}; (d) $\boldsymbol{\mathcal{L}}_{\boldsymbol{R}}$: Adversarial Representation-Independent Moments in Eq. \eqref{eq:20}.
Table \ref{tab:ablation} reports the effects of each module of the EnCounteR by conducting ablation experiments on Simulation, IHDP and ACIC datasets.
From Tables \ref{tab:main} and Table \ref{tab:ablation}, we can draw the following conclusions: (I) Each component in our EnCounteR is essential, since missing any one of them would confuse the encouragement learning and damage the performance of potential outcome prediction and conditional average treatment estimation on three datasets. (II) When all components are fully utilized in EnCounteR, our method achieves optimal performance in causal effect estimation. The results demonstrate that each component of EnCounteR is crucial for estimating counterfactual outcomes and CATE.

\subsection{The Optimization of Hyper-Parameters}
\label{app:parameters}

In this paper, we adopt the minimum counterfactual regression error $\varepsilon_{\text{CFR}}$ on the validation data to determine the optimal hyper-parameters $\{\text{d}_\text{h},\text{d}_\text{r},\alpha\}$. Our approach follows this strategy: firstly, we search for $\text{d}_\text{h} \in \{16,32,64,128,256\}$ and $\alpha \in \{1,2,5,12,15,20\}$, while fixing $\text{d}_\text{r}=\text{d}_\text{x}$, corresponding to the minimum validation error $\varepsilon_{\text{CFR}}$. Then, fixing the optimal $\text{d}_\text{h}$ and $\alpha$, we search for $\text{d}_\text{r} \in \{1,5,8,10,12,20\}$ corresponding to the minimum validation error $\varepsilon_{\text{CFR}}$ again. 
Taking the main experiment \textsc{Mult} as an example, as depicted in Figure \ref{fig:param}, we determine the hyper-parameters that correspond to the smallest $\varepsilon_{\text{CFR}}$ on the validation, which also indicates the smallest $\varepsilon_{\text{PEHE}}$ on \textsc{Mult}. The optimal hyper-parameters are $\text{d}_\text{h}=32, \text{d}_\text{r}=5, \alpha=10$ for \textsc{Mult}. Table \ref{tab:params} shows the optimal hyper-parameters for each dataset.

\section{Conclusion}
Despite the growing body of literature on encouragement designs (EDs) for estimating causal effects, real-world applications often face challenges such as incomplete randomization, limited data, and fewer encouragements than continuous treatments. To address these challenges, we introduce a generalized instrumental variables estimator called \textbf{En}couragement-based \textbf{Counte}rfactual \textbf{R}egression (\textbf{EnCounteR}), which provides identifiability guarantees and efficient methods for estimating Conditional Average Treatment Effects (CATE) in the context of diverse positive encouragement experiments. By designing encouragements that only motivate treatment choices without affecting outcomes, EnCounteR achieves precise treatment effect estimation with reduced variance, applicable to both discrete and continuous treatment settings.

This method is particularly beneficial for analysts comparing different treatments in decision-making scenarios. For instance, it can assist educators in evaluating the causal effects of various teaching strategies to develop personalized educational plans for students. Similarly, it can support the tailored implementation of healthcare interventions and the adaptation of policies to local contexts. A limitation of EnCounteR is that it requires at least two distinct encouragement experiments with variations in treatment. However, we suggest focusing on positive encouragements, such as promoting smoking cessation advice among physicians, which is widely adopted in practice.


\begin{thebibliography}{1}
\bibliographystyle{IEEEtran}
\bibitem{an2015instrumental}
An, W. 2015.
\newblock Instrumental variables estimates of peer effects in social networks.
\newblock \emph{Social Science Research}, 50: 382--394.

\bibitem{anderson2014engaging}
Anderson, A.; Huttenlocher, D.; Kleinberg, J.; and Leskovec, J. 2014.
\newblock Engaging with massive online courses.
\newblock In \emph{Proceedings of the 23rd international conference on World wide web}, 687--698.

\bibitem{angrist1995identification}
Angrist, J.; and Imbens, G. 1995.
\newblock Identification and estimation of local average treatment effects.

\bibitem{angrist1996identification}
Angrist, J.~D.; Imbens, G.~W.; and Rubin, D.~B. 1996.
\newblock Identification of causal effects using instrumental variables.
\newblock \emph{Journal of the American statistical Association}, 91(434): 444--455.

\bibitem{arjovsky2019invariant}
Arjovsky, M.; Bottou, L.; Gulrajani, I.; and Lopez-Paz, D. 2019.
\newblock Invariant risk minimization.
\newblock \emph{arXiv preprint arXiv:1907.02893}.

\bibitem{bang2007estimating}
Bang, H.; and Davis, C.~E. 2007.
\newblock On estimating treatment effects under non-compliance in randomized clinical trials: are intent-to-treat or instrumental variables analyses perfect solutions?
\newblock \emph{Statistics in medicine}, 26(5): 954--964.

\bibitem{2019deepgmm}
Bennett, A.; Kallus, N.; and Schnabel, T. 2019.
\newblock Deep generalized method of moments for instrumental variable analysis.
\newblock \emph{Advances in neural information processing systems}, 32.

\bibitem{bottou2013counterfactual}
Bottou, L.; Peters, J.; Qui{\~n}onero-Candela, J.; Charles, D.~X.; Chickering, D.~M.; Portugaly, E.; Ray, D.; Simard, P.; and Snelson, E. 2013.
\newblock Counterfactual Reasoning and Learning Systems: The Example of Computational Advertising.
\newblock \emph{Journal of Machine Learning Research}, 14(11).

\bibitem{breslow2013studying}
Breslow, L.; Pritchard, D.~E.; DeBoer, J.; Stump, G.~S.; Ho, A.~D.; and Seaton, D.~T. 2013.
\newblock Studying learning in the worldwide classroom research into edX's first MOOC.
\newblock \emph{Research \& Practice in Assessment}, 8: 13--25.

\bibitem{burgess2017review}
Burgess, S.; Small, D.~S.; and Thompson, S.~G. 2017.
\newblock A review of instrumental variable estimators for Mendelian randomization.
\newblock \emph{Statistical methods in medical research}, 26(5): 2333--2355.

\bibitem{cheng2023discovering}
Cheng, D.; Li, J.; Liu, L.; Yu, K.; Le, T. D.; and Liu, J. 2023.
\newblock Discovering ancestral instrumental variables for causal inference from observational data.
\newblock In \emph{IEEE Transactions on Neural Networks and Learning Systems}.

\bibitem{creager2021environment}
Creager, E.; Jacobsen, J.-H.; and Zemel, R. 2021.
\newblock Environment inference for invariant learning.
\newblock In \emph{International Conference on Machine Learning}, 2189--2200. PMLR.

\bibitem{devriendt2020learning}
Devriendt, F.; Van Belle, J.; Guns, T.; and Verbeke, W. 2020.
\newblock Learning to rank for uplift modeling.
\newblock In \emph{IEEE Transactions on Knowledge and Data Engineering}, 34(10):4888--4904.

\bibitem{lewis2020agmm}
Dikkala, N.; Lewis, G.; Mackey, L.; and Syrgkanis, V. 2020.
\newblock Minimax estimation of conditional moment models.
\newblock \emph{Advances in Neural Information Processing Systems}, 33: 12248--12262.

\bibitem{dorie2019automated}
Dorie, V.; Hill, J.; Shalit, U.; Scott, M.; and Cervone, D. 2019.
\newblock Automated versus do-it-yourself methods for causal inference: Lessons learned from a data analysis competition.
\newblock \emph{Statistical Science}, 34(1): 43--68.

\bibitem{duchi2021learning}
Duchi, J.~C.; and Namkoong, H. 2021.
\newblock Learning models with uniform performance via distributionally robust optimization.
\newblock \emph{The Annals of Statistics}, 49(3): 1378--1406.

\bibitem{fletcher2010social}
Fletcher, J.~M. 2010.
\newblock Social interactions and smoking: Evidence using multiple student cohorts, instrumental variables, and school fixed effects.
\newblock \emph{Health economics}, 19(4): 466--484.

\bibitem{hartford2017deep}
Hartford, J.; Lewis, G.; Leyton-Brown, K.; and Taddy, M. 2017.
\newblock Deep IV: A flexible approach for counterfactual prediction.
\newblock In \emph{International Conference on Machine Learning}, 1414--1423. PMLR.

\bibitem{hassanpour2020learning}
Hassanpour, N.; and Greiner, R. 2020.
\newblock Learning disentangled representations for counterfactual regression.
\newblock In \emph{International Conference on Learning Representations}.

\bibitem{hill2011bayesian}
Hill, J.~L. 2011.
\newblock Bayesian nonparametric modeling for causal inference.
\newblock \emph{Journal of Computational and Graphical Statistics}, 20(1): 217--240.

\bibitem{hirano2000assessing}
Hirano, K.; Imbens, G.~W.; Rubin, D.~B.; and Zhou, X.-H. 2000.
\newblock Assessing the effect of an influenza vaccine in an encouragement design.
\newblock \emph{Biostatistics}, 1(1): 69--88.

\bibitem{holland1988causal}
Holland, P.~W. 1988.
\newblock Causal inference, path analysis and recursive structural equations models.
\newblock \emph{ETS Research Report Series}, 1988(1): i--50.

\bibitem{imbens2015causal}
Imbens, G.~W.; Rubin, D.~B.; et~al. 2015.
\newblock Causal Inference for Statistics, Social, and Biomedical Sciences.
\newblock \emph{Cambridge Books}.

\bibitem{kang2016peer}
Kang, H.; and Imbens, G. 2016.
\newblock Peer encouragement designs in causal inference with partial interference and identification of local average network effects.
\newblock \emph{arXiv preprint arXiv:1609.04464}.

\bibitem{kizilcec2014encouraging}
Kizilcec, R.~F.; Schneider, E.; Cohen, G.~L.; and McFarland, D.~A. 2014.
\newblock Encouraging forum participation in online courses with collectivist, individualist and neutral motivational framings.
\newblock \emph{EMOOCS 2014, Proceedings of the European MOOC stakeholder summit}, 80--87.


\bibitem{kuang2020data}
Kuang, K.; Cui, P.; Zou, H.; Li, B.; Tao, J.; Wu, F.; and Yang, S. 2020.
\newblock Data-driven variable decomposition for treatment effect estimation.
\newblock In \emph{IEEE Transactions on Knowledge and Data Engineering}, 34(5):2120--2134.


\bibitem{kohavi2011unexpected}
Kohavi, R.; and Longbotham, R. 2011.
\newblock Unexpected results in online controlled experiments.
\newblock \emph{ACM SIGKDD Explorations Newsletter}, 12(2): 31--35.

\bibitem{liu2021heterogeneous}
Liu, J.; Hu, Z.; Cui, P.; Li, B.; and Shen, Z. 2021.
\newblock Heterogeneous risk minimization.
\newblock In \emph{International Conference on Machine Learning}, 6804--6814. PMLR.

\bibitem{liu2021kernelized}
Liu, J.; Hu, Z.; Cui, P.; Li, B.; and Shen, Z. 2021.
\newblock Kernelized heterogeneous risk minimization.
\newblock \emph{Advances in Neural Information Processing Systems}, 34: 21720--21731.

\bibitem{louizos2017causal}
Louizos, C.; Shalit, U.; Mooij, J.~M.; Sontag, D.; Zemel, R.; and Welling, M. 2017.
\newblock Causal effect inference with deep latent-variable models.
\newblock \emph{Advances in neural information processing systems}, 30.

\bibitem{muandet2020dual}
Muandet, K.; Mehrjou, A.; Lee, S.~K.; and Raj, A. 2020.
\newblock Dual instrumental variable regression.
\newblock \emph{Advances in Neural Information Processing Systems}, 33: 2710--2721.

\bibitem{newey2003instrumental}
Newey, W.~K.; and Powell, J.~L. 2003.
\newblock Instrumental variable estimation of nonparametric models.
\newblock \emph{Econometrica}, 71(5): 1565--1578.

\bibitem{nie2021vcnet}
Nie, L.; Ye, M.; qiang liu; and Nicolae, D. 2021.
\newblock VCNet and Functional Targeted Regularization For Learning Causal Effects of Continuous Treatments.
\newblock In \emph{International Conference on Learning Representations}.

\bibitem{pearl2009causality}
Pearl, J. 2009.
\newblock \emph{Causality}.
\newblock Cambridge university press.

\bibitem{pearl2010causal}
Pearl, J. 2010.
\newblock Causal inference.
\newblock \emph{Causality: objectives and assessment}, 39--58.

\bibitem{permutt1989simultaneous}
Permutt, T.; and Hebel, J.~R. 1989.
\newblock Simultaneous-equation estimation in a clinical trial of the effect of smoking on birth weight.
\newblock \emph{Biometrics}, 619--622.

\bibitem{reich2015rebooting}
Reich, J. 2015.
\newblock Rebooting MOOC research.
\newblock \emph{Science}, 347(6217): 34--35.

\bibitem{sexton1984clinical}
Sexton, M.; and Hebel, J.~R. 1984.
\newblock A clinical trial of change in maternal smoking and its effect on birth weight.
\newblock \emph{Jama}, 251(7): 911--915.

\bibitem{shalit2017estimating}
Shalit, U.; Johansson, F.~D.; and Sontag, D. 2017.
\newblock Estimating individual treatment effect: generalization bounds and algorithms.
\newblock In \emph{International Conference on Machine Learning}, 3076--3085. PMLR.

\bibitem{singh2019kernel}
Singh, R.; Sahani, M.; and Gretton, A. 2019.
\newblock Kernel instrumental variable regression.
\newblock \emph{Advances in Neural Information Processing Systems}, 32.

\bibitem{small2007sensitivity}
Small, D.~S. 2007.
\newblock Sensitivity analysis for instrumental variables regression with overidentifying restrictions.
\newblock \emph{Journal of the American Statistical Association}, 102(479): 1049--1058.

\bibitem{small2008war}
Small, D.~S.; Rosenbaum, P.~R.; and R. 2008.
\newblock War and wages: the strength of instrumental variables and their sensitivity to unobserved biases.
\newblock \emph{Journal of the American Statistical Association}, 103(483): 924--933.

\bibitem{sovey2011instrumental}
Sovey, A.~J.; and Green, D.~P. 2011.
\newblock Instrumental variables estimation in political science: A readers’ guide.
\newblock \emph{American Journal of Political Science}, 55(1): 188--200.


\bibitem{sun2024sequential}
Sun, Z.; He, B.; Shen, S.; Wang, Z.; Gong, Z.; Ma, C.; Qi, Q.; and Chen, X. 2024.
\newblock Sequential causal effect estimation by jointly modeling the unmeasured confounders and instrumental variables.
\newblock In \emph{IEEE Transactions on Knowledge and Data Engineering}.

\bibitem{wang2023out}
Wang, H.; Kuang, K.; Lan, L.; Wang, Z.; Huang, W.; Wu, F.; and Yang, W. 2023.
\newblock Out-of-distribution generalization with causal feature separation.
\newblock In \emph{IEEE Transactions on Knowledge and Data Engineering}.

\bibitem{wald1940fitting}
Wald, A. 1940.
\newblock The fitting of straight lines if both variables are subject to error.
\newblock \emph{The annals of mathematical statistics}, 11(3): 284--300.

\bibitem{west2008alternatives}
West, S.~G.; Duan, N.; Pequegnat, W.; Gaist, P.; Des~Jarlais, D.~C.; Holtgrave, D.; Szapocznik, J.; Fishbein, M.; Rapkin, B.; Clatts, M.; et~al. 2008.
\newblock Alternatives to the randomized controlled trial.
\newblock \emph{American journal of public health}, 98(8): 1359--1366.

\bibitem{wooldridge2016introductory}
Wooldridge, J.~M.; Wadud, M.; and Lye, J. 2016.
\newblock \emph{Introductory econometrics: Asia pacific edition with online study tools 12 months}.
\newblock Cengage AU.

\bibitem{wu2022instrumental}
Wu, A.; Kuang, K.; Li, B.; and Wu, F. 2022.
\newblock Instrumental variable regression with confounder balancing.
\newblock In \emph{International Conference on Machine Learning}, 24056--24075. PMLR.


\bibitem{wu2022learning}
Wu, A.; Yuan, J.; Kuang, K.; Li, B.; Wu, R.; Zhu, Q.; Zhuang, Y.; and Wu, F. 2022.
\newblock Learning decomposed representations for treatment effect estimation.
\newblock In \emph{IEEE Transactions on Knowledge and Data Engineering}, 35(5):4989--5001.

\bibitem{xu2020dfiv}
Xu, L.; Chen, Y.; Srinivasan, S.; de~Freitas, N.; Doucet, A.; and Gretton, A. 2021.
\newblock Learning Deep Features in Instrumental Variable Regression.
\newblock In \emph{International Conference on Learning Representations}.

\bibitem{yao2021survey}
Yao, L.; Chu, Z.; Li, S.; Li, Y.; Gao, J.; and Zhang, A. 2021.
\newblock A survey on causal inference.
\newblock \emph{ACM Transactions on Knowledge Discovery from Data (TKDD)}, 15(5): 1--46.

\bibitem{zhang2021treatment}
Zhang, W.; Liu, L.; and Li, J. 2021.
\newblock Treatment effect estimation with disentangled latent factors.
\newblock In \emph{Proceedings of the AAAI Conference on Artificial Intelligence}, volume~35, 10923--10930.

\bibitem{zhao2024networked}
Zhao, Z.; Wu, A.; Kuang, K.; Xiong, R.; Li, B.; Wang, Z.; and Wu, F. 2024.
\newblock Networked Instrumental Variable for Treatment Effect Estimation with Unobserved Confounders.
\newblock In \emph{IEEE Transactions on Knowledge and Data Engineering}.

\end{thebibliography}
\end{document}